%% file: arxiv.tex
\documentclass{article} 
\usepackage{iclr2027_conference_preprint,times}

\input{math_commands.tex}

\usepackage{changepage}
\usepackage{algorithm}
\usepackage{textcomp,gensymb}
\usepackage{algpseudocode}
\usepackage[dvipsnames,table]{xcolor}
\usepackage{mathtools}
\usepackage{makecell}
\usepackage{multirow}
\usepackage{graphicx}

\definecolor{linkblue}{RGB}{0,0,180}
\definecolor{citegreen}{RGB}{60,160,90}
\definecolor{reflink}{RGB}{160,50,60}
\usepackage[colorlinks,linkcolor=reflink,urlcolor=bookorange,citecolor=citegreen]{hyperref}

\usepackage{wrapfig}
\usepackage{booktabs}
\usepackage{amssymb}
\usepackage{pifont}
\usepackage{url}
\usepackage{cleveref}
\usepackage{float}

\Crefname{equation}{Eq.}{Eqs.}
\creflabelformat{equation}{(#2#1#3)}
\Crefname{figure}{Fig.}{Figs.}
\crefname{appendix}{appendix}{appendices}
\Crefname{appendix}{Appendix}{Appendices}
\definecolor{bookorange}{HTML}{D96C18}
\usepackage{amsthm}

\newtheorem{prop}{Proposition}
\crefname{prop}{proposition}{propositions}
\Crefname{prop}{Proposition}{Propositions}

\usepackage[most]{tcolorbox}

\newtcolorbox{keyformula}{
    enhanced,
    colback=black!4,
    colframe=black!100,
    boxrule=0.8pt,
    arc=1.5mm,
    left=1.0mm,
    right=1.0mm,
    top=0.8mm,
    bottom=0.8mm,
    before skip=5pt,
    after skip=5pt,
    fontupper=\small,
    before upper={
        \setlength{\abovedisplayskip}{0pt}
        \setlength{\belowdisplayskip}{0pt}
        \setlength{\abovedisplayshortskip}{0pt}
        \setlength{\belowdisplayshortskip}{0pt}
    }
}

\title{Learning What to Recall: Adaptive Multi-\\Cue Episodic Memory for World Models}

\author{%
\parbox[t]{\textwidth}{%
\vspace*{-2mm}
\centering\normalfont
\begin{tabular*}{\textwidth}{
  @{\extracolsep{\fill}}ccc@{}
}
\textbf{Beomsu Kim}$^{1}$ &
\textbf{Chieh-Hsin Lai}$^{2}$ &
\textbf{Bac Nguyen}$^{2}$ \\
\textbf{Amir Bar}$^{3}$ &
\textbf{Jong Chul Ye}$^{1,\dagger}$ &
\textbf{Yuki Mitsufuji}$^{2,\dagger}$ \\[0.4em]
$^{1}$KAIST &
$^{2}$Sony Group Corporation &
$^{3}$Imperial College London
\end{tabular*}\\[0.5em]}}

\newcommand{\xmark}{\textcolor{red!45!black}{\ding{55}}}
\newcommand{\cmark}{\textcolor{green!45!black}{\ding{51}}}

\renewcommand{\eqref}[1]{Eq.~(\ref{#1})}

\newcommand{\bma}{{\bm{a}}}

\newcommand{\bmf}{{\bm{f}}}
\newcommand{\bmk}{{\bm{k}}}

\newcommand{\bmo}{{\bm{o}}}
\newcommand{\bmq}{{\bm{q}}}

\newcommand{\bmw}{{\bm{w}}}
\newcommand{\bmx}{{\bm{x}}}
\newcommand{\bmy}{{\bm{y}}}
\newcommand{\bmz}{{\bm{z}}}

\newcommand{\cB}{\mathcal{B}}
\newcommand{\cC}{\mathcal{C}}

\newcommand{\cL}{\mathcal{L}}
\newcommand{\cM}{\mathcal{M}}
\newcommand{\cQ}{\mathcal{Q}}
\newcommand{\cZ}{\mathcal{Z}}
\newcommand{\cR}{\mathcal{R}}

\newcommand{\bbE}{\mathbb{E}}

\newcommand{\emdr}{EMDR\textsuperscript{2}}

\DeclareMathOperator{\vision}{vision}
\DeclareMathOperator{\audio}{audio}
\DeclareMathOperator{\ret}{Ret}

\DeclareMathOperator{\wm}{WM}

\DeclareMathOperator{\nce}{NCE}
\DeclareMathOperator{\mlp}{MLP}
\DeclareMathOperator{\sg}{sg}

\DeclareMathOperator{\enc}{enc}
\DeclareMathOperator{\chunk}{chunk}
\DeclareMathOperator{\lazy}{lazy}

\DeclareMathOperator{\topk}{TopK}
\DeclareMathOperator{\ctopk}{ChunkTopK}
\DeclareMathOperator{\cossim}{CosSim}

\iclrpreprintcopy
\begin{document}

\maketitle

\begingroup
\makeatletter
\renewcommand{\@makefntext}[1]{\noindent#1}
\makeatother
\renewcommand{\thefootnote}{}
\footnotetext[0]{$^\dagger$Corresponding Authors.\\ Work done while Beomsu Kim was an intern at Sony. \\ Contacts: \texttt{beomsu.kim@kaist.ac.kr}, \texttt{chieh-hsin.lai@sony.com}}
\endgroup

\begin{figure}[H]
\vspace{-6mm}
\centering
\includegraphics[width=1.0\linewidth]{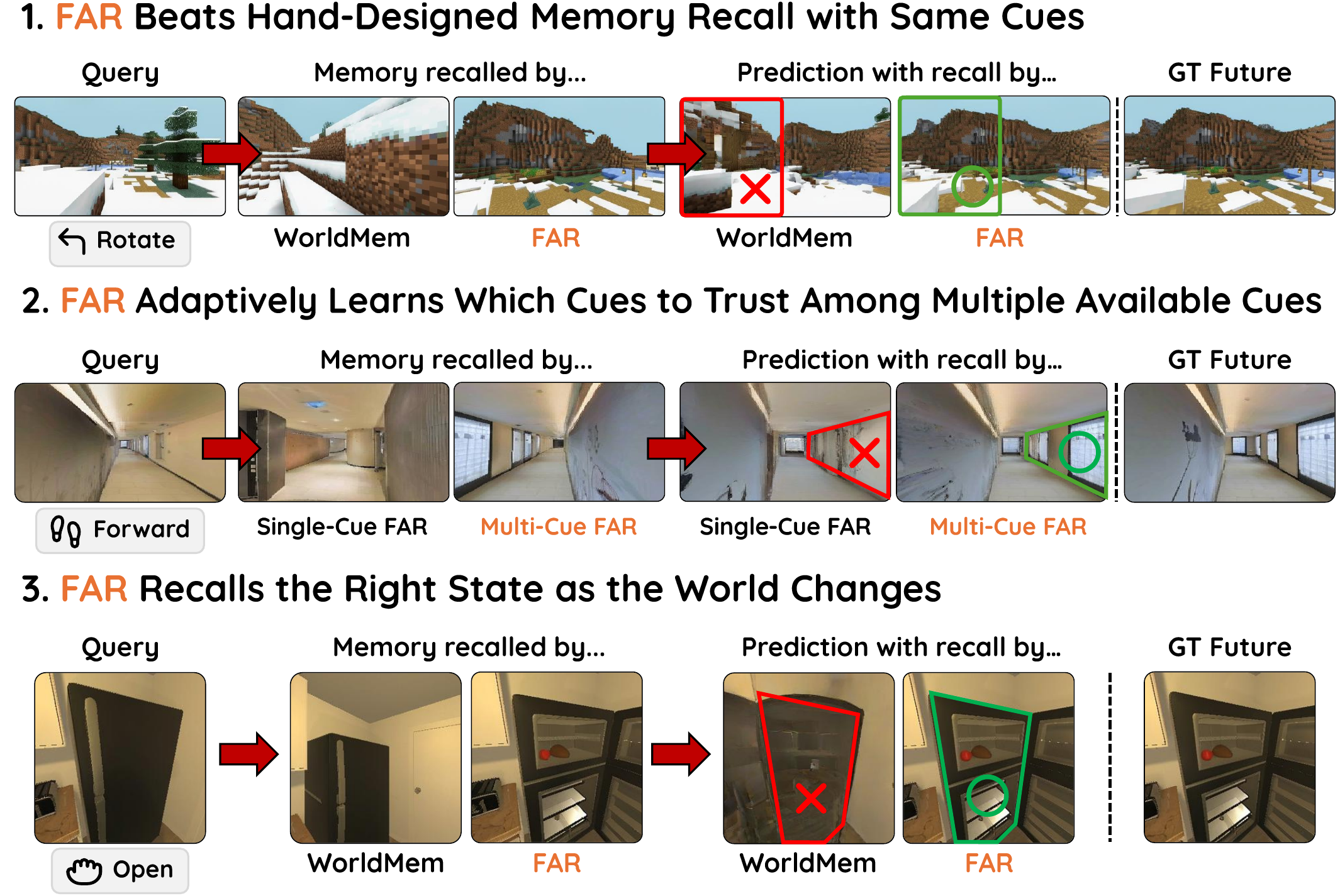}
\vspace{-5mm}
\caption{\textbf{The three advantages of Future-Aware Recall (FAR).} \textbf{Top:} FAR retrieves context based on its usefulness for predicting the future, beating hand-designed memory recall rules. In the figure, FAR is compared against WorldMem~\citep{xiao2025worldmem}. \textbf{Middle:} FAR adaptively learns query-dependent weights over multiple available retrieval cues. Here, FAR learns to rely more on audio cue to distinguish memories with occluded and clear line of sight.
 \textbf{Bottom:} FAR recalls memory to support state-correct future predictions. Here, FAR distinguishes between closed and open fridges, in contrast to WorldMem.}
\vspace{-3mm}
\label{fig:teaser}
\end{figure}

\begin{abstract}
\vspace{-2mm}

World models predict future observations from current experience and actions, yet prediction can depend on observations seen far in the past.
Episodic memory preserves past observations for later recall; however, as
memory accumulates, it raises a fundamental question:
\emph{which memories are useful for the current prediction, and which available
retrieval cues should be trusted to find them?}
This is challenging because fixed criteria based on recency, pose overlap, or
visual similarity can be unreliable across environments and queries.
We propose {\color{bookorange}\textbf{{Future-Aware Recall}} (\textbf{FAR})}, a
framework that learns episodic recall from future-aware predictive supervision
and adaptive multi-cue scoring.
During training, FAR measures predictive utility by the conditional
log-likelihood of the realized future given recalled context, approximated by
negative diffusion prediction loss, and uses it to
train a retriever that remains future-blind at inference.
The retriever learns cue-specific relevance and automatically determines which
available retrieval cues, such as time, pose, vision, and audio, to trust for
each query when selecting memories.
Across three complementary settings, FAR outperforms hand-designed recall even
with the same retrieval cues, automatically adapts which available cues to
trust, and recalls the right history as the world changes. Together, these results establish FAR as a flexible, principled approach to
episodic memory access in world models.
Project page at: {\color{bookorange}\textbf{\url{https://1202kbs.github.io/FAR-Project-Page/}}}
\end{abstract}

\vspace{-4mm}
\section{Introduction}

In world modeling, the current observation provides only a partial view of the
underlying environment state.
Scenes and objects persist, and may continue to evolve, after leaving the field
of view.
To predict what will be observed upon revisit, world models must preserve
information across extended interactions
~\citep{ha2018wm,hafner2019planet}.
Memory is therefore central to long-horizon, persistent world modeling. Recent world models explore different mechanisms for maintaining such information.
\emph{Persistent-state memory} integrates observations into an evolving
representation of the current world, such as a recurrent latent state
~\citep{hafner2020dreamerv1,hafner2021dreamerv2,hafner2025dreamerv3}
or persistent 3D representation
~\citep{wu2025spmem,garcin2026persist}.
\emph{Episodic memory} instead preserves individual past observations as
separately accessible memories that can later be recalled
~\citep{xiao2025worldmem,hu2026longliverag}.
These memory functions are complementary:
persistent-state memory provides compact access to an evolving world state,
while episodic memory preserves directly recoverable evidence from past
experience.

Episodic memory, however, introduces a fundamental recall problem.
As interaction continues, the number of stored memories grows, while only a
small fraction may be useful for a particular prediction.
Existing world models commonly define relevance using fixed criteria such as
temporal recency~\citep{bar2025nwm}, pose-based field-of-view overlap
~\citep{xiao2025worldmem}, or visual embedding similarity
~\citep{hu2026longliverag}.
However, similarity under a particular cue need not reflect predictive
usefulness, and the reliability of that cue can vary across situations.
For example, in an elbow-shaped corridor
(see \Cref{fig:ss_retrieval}), pose-based retrieval may favor a nearby
memory across a wall, while visual appearance may be ambiguous across similar
corridor segments.
Spatial audio may instead better identify relevant past experience in this
case, while pose or vision may be more informative elsewhere.
This raises our central question:
\begin{adjustwidth}{1.5em}{1.5em}
\emph{
How can a world model learn which past observations are useful for future
prediction, and which available retrieval cues can identify them before the
future is known?
}
\end{adjustwidth}
We address this with {\color{bookorange}\textbf{{Future-Aware Recall}} (\textbf{FAR})}.
During training, the future is observed, allowing FAR to evaluate each candidate
memory by how well it helps the world model predict what actually happens.
We call this \emph{predictive utility}, and use it to supervise a retriever that
must operate without access to the future at inference.
To predict this relevance at recall time, FAR learns a relevance function for
each available cue, such as time, pose, vision, or audio, together with
query-dependent weights that determine which cues to trust.
These cues are used to select memories; they need not be provided to the world
model as additional generation conditions.
FAR's latent-variable formulation connects naturally to retrieval-augmented
language models~\citep{lewis2020rag,sachan2021emdr}, allowing us to adapt
discrete retriever optimization to episodic recall for world models.

We instantiate FAR with video diffusion world models and an external episodic
memory.
Historical observations remain individually addressable, while the retriever
selects a compact Top-$K$ context for each prediction.
Predictive utility is defined through the conditional likelihood of the realized
future; for our video diffusion instantiation, we use negative diffusion
prediction loss as a tractable surrogate.
The recalled context then conditions the world model for future prediction.

Across three complementary environments, FAR consistently improves episodic
recall and downstream prediction over fixed retrieval strategies.
In LoopNav~\citep{lian2025loopnav}, FAR outperforms hand-designed relevance
rules even when given the same retrieval cues, while adaptively using time,
pose, and vision further improves long-horizon prediction.
In SoundSpaces~\citep{chen2020soundspaces}, FAR learns to rely on audio when
spatial cues become ambiguous, particularly for longer trajectories and sparser
memories.
In changing-state AI2-THOR environments~\citep{kolve2017ai2thor}, FAR recalls
history consistent with the current world state and reduces errors caused by
stale memories.
\Cref{fig:teaser} summarizes these complementary capabilities.
Together, these results demonstrate FAR as a flexible, prediction-driven
mechanism for episodic memory access in persistent world models.

\vspace{-2mm}
\section{Background and Related Work}
\label{sec:background}
\vspace{-2mm}

\textbf{World models and episodic memory.}
World models predict future observations from interaction history and actions
using recurrent dynamics, diffusion, or masked generative modeling
~\citep{hafner2020dreamerv1,hafner2021dreamerv2,hafner2025dreamerv3,
alonso2024diamond,bruce2024genie,kim2026compact}.
Over long interactions, relevant information may lie far in the past, making
repeated processing of the full history costly.
External episodic memory instead preserves past observations for selective
recall using cues such as time, pose, vision, or audio.
We focus on learning which memories to recall and which available cues to trust, rather than relying on fixed relevance rules for individual cues.

\textbf{Internal and external recall.}
Generator-internal methods maintain long-range information through attention, routing, context compression, or recurrent memory
~\citep{cai2026moc,yu2026memlearner,peng2026car}, with recent world models also using linear attention for efficient long-horizon memory ~\citep{wang2026context,zhu2026sanawm}.
Generator-external methods instead search an explicit memory and pass only a compact subset to the world model ~\citep{xiao2025worldmem,yu2025cam,chen2025vrag,li2026i3dm}.
These approaches are complementary: internal memory compactly summarizes
history, while external episodic memory preserves individually addressable
observations for selective recall.
We focus on external recall, which can be combined with internal long-context memory mechanisms.

\textbf{Learning predictive external recall.}
External world-model recall commonly uses fixed temporal, geometric, or
embedding-based relevance, while discrete Top-$K$ selection prevents direct
gradient flow from the world-model objective.
Related retrieval-augmented generation methods learn discrete retrieval from
downstream likelihood; for example, \emdr~\citep{sachan2021emdr} uses reader
likelihood to supervise a document retriever.
FAR applies this latent-variable principle to episodic recall from an evolving
interaction history, defining relevance through future predictive utility.
Its retriever learns both cue-specific relevance and query-dependent cue
reliability from available cues whose usefulness can vary across queries.
\Cref{tab:related_memory_retrieval} in \Cref{app:related_work} provides a
detailed comparison.

\vspace{-2mm}
\section{Our Method: Future-Aware Episodic Memory}
\label{sec:method}
\vspace{-2mm}


We propose {\color{bookorange}
\textbf{{Future-Aware Recall}} (\textbf{FAR})}, a framework for learning predictive memory relevance in external episodic recall.
At inference, FAR must select useful memories using only the episodic memory and current prediction query, since the future to be predicted is unavailable.
During training, however, that future is observed.
FAR exploits this additional information to identify which memories would have best supported prediction and uses these signals to train a retriever that remains future-blind at inference.
FAR thereby learns both \emph{which memories to recall} and \emph{which retrieval cues to trust}. A high-level overview is provided in \Cref{fig:method}.
We first formalize the \emph{recall problem} underlying FAR, and then introduce its key mechanisms.

\textbf{The memory recall problem.}
Let $\bmo_t$ denote the observation at step $t$ and $\bma_t$ the subsequent
action, with prediction query
$\cQ_t\coloneqq(\bmo_t,\bma_t)$.
The \emph{episodic memory} contains the past observations
$\cM_{t-1}\coloneqq(\bmo_1,\ldots,\bmo_{t-1})$.
FAR recalls a compact context
$\cC_t\subseteq\cM_{t-1}$ with $|\cC_t|=K$, which the world model uses to predict
the next observation: $p_\theta(\bmo_{t+1}\mid\cC_t,\cQ_t)$.

To locate useful memories, each historical observation $\bmo_i$ is associated
with retrieval cues $\bmz_i$, such as time, pose, a visual representation, or an
audio representation.
We collect the historical cues as
$\cZ_{t-1}\coloneqq(\bmz_1,\ldots,\bmz_{t-1})$
and define the current retrieval query as
$\cR_t\coloneqq(\bmz_t,\bma_t)$.
These cues are used by the external retriever to determine where to recall
from in episodic memory.
Specifically, given $\cZ_{t-1}$ and $\cR_t$, the retriever assigns each candidate
context $\cC_t$ a \emph{relevance score}
$s_\phi(\cC_t\mid\cZ_{t-1},\cR_t)$, inducing the \emph{recall distribution}
\begin{align}
r_\phi(\cC_t\mid\cZ_{t-1},\cR_t)
\propto
\exp s_\phi(\cC_t\mid\cZ_{t-1},\cR_t).
\label{eq:recall_distribution}
\end{align}
Exact evaluation of \Cref{eq:recall_distribution} is generally intractable because it requires considering a combinatorial number of possible recalled context sets as the episodic memory $\cM_{t-1}$ grows. For discrete Top-$K$ recall, we therefore use the candidate-level latent-variable approximation adapted from \emdr~\citep{sachan2021emdr}, described in \Cref{app:retriever_posterior_approx}.

Together, the retriever and world model define the latent-context predictive
model
\begin{align}
\textstyle
p_{\theta,\phi}(\bmo_{t+1}\mid
\cM_{t-1},\cZ_{t-1},\cR_t,\cQ_t)
=
\sum_{\cC_t}
p_\theta(\bmo_{t+1}\mid\cC_t,\cQ_t)\,
r_\phi(\cC_t\mid\cZ_{t-1},\cR_t).
\label{eq:latent_context_model}
\end{align}
Here, $r_\phi$ determines which past observation is recalled, while $p_\theta$
determines how recalled context supports prediction.
Importantly, retrieval cues guide the selection of $\cC_t$; they need not be
provided to the world model as generation conditions.
Learning effective episodic recall thus reduces to learning the relevance
function $s_\phi$, and hence the recall distribution $r_\phi$.
We now derive predictive supervision for relevance and show how FAR
adaptively uses multiple retrieval cues to estimate it.

\begin{figure}[t]
\centering
\includegraphics[width=1.0\linewidth]{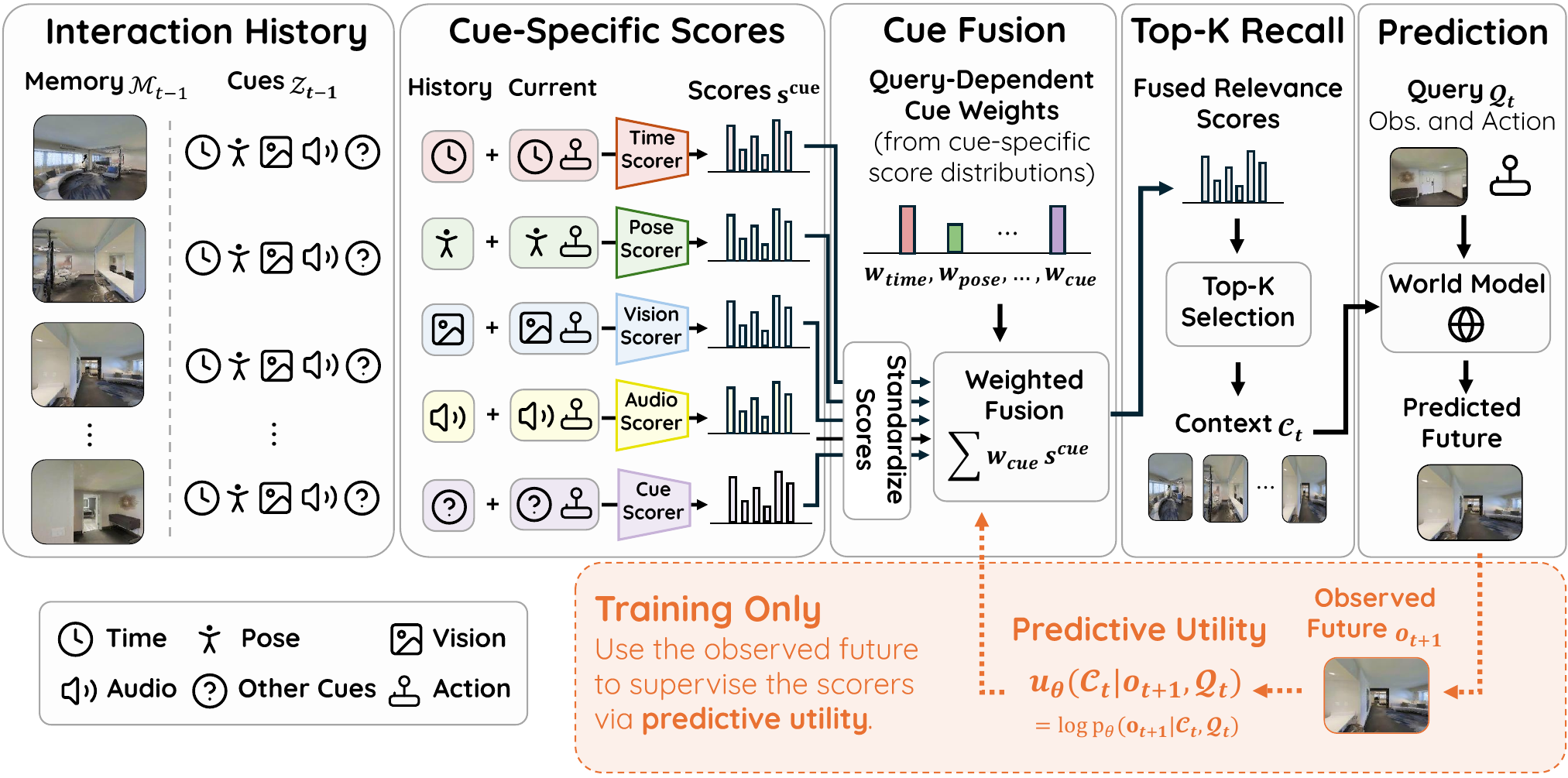}
\vspace{-7mm}
\caption{\textbf{Overview of Future-Aware Recall (FAR).} FAR scores memories using general retrieval cues $\cZ_{t-1}$, adaptively fuses cue-specific relevance, and recalls a compact Top-$K$ context $\cC_t$ from episodic memory $\cM_{t-1}$ for world-model prediction. During training, the observed future $\bmo_{t+1}$ provides predictive utility to supervise the retriever, while recall remains future-blind at inference.}
\vspace{-3mm}
\label{fig:method}
\end{figure}

\subsection{Learning Which Memories to Recall}
\label{sec:predictive_relevance}

The recall model above is governed by the future-blind relevance score
$s_\phi(\cC_t\mid\cZ_{t-1},\cR_t)$.
The key question is how this score should be learned so that highly ranked
contexts are actually useful for prediction.
Our principle is predictive:
\emph{a recalled context is useful when it provides information about the future
beyond what is already available in the current prediction query.}

\textbf{Predictive utility.}
Let $p_{\rm data}$ denote the environment data-generating distribution over
interaction histories, retrieval cues, and future observations.
For a recalled context
$\cC_t\sim r_\phi(\cdot\mid\cZ_{t-1},\cR_t)$,
its predictive information about the next observation is naturally measured by $I_\phi(\bmo_{t+1};\cC_t\mid\cQ_t)$, where the subscript emphasizes that $\cC_t$ is selected by the retriever
$r_\phi$.
$I_\phi$ measures how much the recalled context reduces uncertainty about
the future beyond what is already known from $\cQ_t$.
The following proposition connects this information-theoretic notion of relevance
to the world model $p_\theta$.

\begin{prop}[Predictive relevance bound]\label{prop:relevance-bound}
For any predictive distribution
$p_\theta(\bmo_{t+1}\mid\cC_t,\cQ_t)$,
\begin{align}
I_\phi\!\left(\bmo_{t+1};\cC_t\mid\cQ_t\right)
\geq
\bbE\!\left[
\log p_\theta(\bmo_{t+1}\mid\cC_t,\cQ_t)
-
\log p_{\rm data}(\bmo_{t+1}\mid\cQ_t)
\right],
\label{eq:predictive_information_bound}
\end{align}
where the expectation is over training interactions from $p_{\rm data}$ and
$\cC_t\sim r_\phi(\cdot\mid\cZ_{t-1},\cR_t)$.
\end{prop}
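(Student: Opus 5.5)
The argument is the standard Barber--Agakov variational lower bound on mutual information, applied conditionally on $\cQ_t$. Let $p_\phi$ denote the joint distribution of $(\cZ_{t-1},\cR_t,\cQ_t,\bmo_{t+1})\sim p_{\rm data}$ together with $\cC_t\sim r_\phi(\cdot\mid\cZ_{t-1},\cR_t)$. Under $p_\phi$, let $p_\phi(\bmo_{t+1}\mid\cC_t,\cQ_t)$ be the induced true conditional of the future given the recalled context and the query.

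\textbf{Step 1.} Write the conditional mutual information as an expected log-ratio:
\begin{align*}
I_\phi(\bmo_{t+1};\cC_t\mid\cQ_t)
=
\bbE\!\left[\log p_\phi(\bmo_{t+1}\mid\cC_t,\cQ_t)-\log p_\phi(\bmo_{t+1}\mid\cQ_t)\right].
\end{align*}
Since the retriever only selects a context and does not alter the environment, marginalizing out $\cC_t$ returns the data distribution. Hence $p_\phi(\bmo_{t+1}\mid\cQ_t)=p_{\rm data}(\bmo_{t+1}\mid\cQ_t)$, which matches the second term on the right-hand side of \Cref{eq:predictive_information_bound}.

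\textbf{Step 2.} Add and subtract $\log p_\theta(\bmo_{t+1}\mid\cC_t,\cQ_t)$ inside the expectation. This gives
\begin{align*}
I_\phi
=
\bbE\!\left[\log p_\theta(\bmo_{t+1}\mid\cC_t,\cQ_t)-\log p_{\rm data}(\bmo_{t+1}\mid\cQ_t)\right]
+\bbE_{\cC_t,\cQ_t}\!\left[\mathrm{KL}\!\left(p_\phi(\cdot\mid\cC_t,\cQ_t)\,\|\,p_\theta(\cdot\mid\cC_t,\cQ_t)\right)\right].
\end{align*}
Because KL divergence is nonnegative (Gibbs' inequality), the bound follows. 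Equality holds exactly when $p_\theta$ matches the true conditional almost everywhere.

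\textbf{Main obstacle.} The delicate step is justifying the marginal identity in Step 1. It needs $\cC_t$ to be generated from variables observed before $\bmo_{t+1}$, so that the future-blind retriever does not perturb the joint of $(\bmo_{t+1},\cQ_t)$. I would state this explicitly as a modeling assumption: $\cC_t$ is a deterministic subset of $\cM_{t-1}$ indexed by a draw from $r_\phi$. I would also assume the expectations are finite, so that the KL term is well defined and the subtraction is legitimate.
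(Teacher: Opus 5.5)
Your proposal is correct and follows the paper's proof essentially step for step: you form the joint $p_{\rm data}\cdot r_\phi$, use the normalization of $r_\phi$ to get $p_\phi(\bmo_{t+1}\mid\cQ_t)=p_{\rm data}(\bmo_{t+1}\mid\cQ_t)$, and add and subtract $\log p_\theta$ to expose a nonnegative expected KL gap that vanishes when $p_\theta$ matches the induced conditional. One quibble: the marginal identity you flag as delicate does not actually require future-blindness, since it holds for any normalized kernel generating $\cC_t$ from the data variables (the kernel sums to one over $\cC_t$), though you should keep $\cM_{t-1}$ in the joint, as the paper does, because $\cC_t\subseteq\cM_{t-1}$.
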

See the derivation in
\Cref{app:predictive_utility_derivation}.
For a fixed training pair $(\bmo_{t+1},\cQ_t)$, the second term in
\Cref{eq:predictive_information_bound} depends only on $p_{\rm data}$, and is therefore identical across recalled contexts and independent
of the trainable parameters.
The remaining context-dependent term hence motivates the
\emph{predictive utility}:
\begin{keyformula}
\begin{align}
u_\theta(\cC_t\mid\bmo_{t+1},\cQ_t)
\coloneqq
\log p_\theta(\bmo_{t+1}\mid\cC_t,\cQ_t).
\label{eq:predictive_utility}
\end{align}
\end{keyformula}
A context has high predictive utility when conditioning on it makes the realized
future more likely under the world model.
Thus, memory relevance depends on the prediction being made: the same past
experience may be highly useful for one query and largely irrelevant for another.

\textbf{Future-aware predictive supervision.}
Predictive utility depends on the realized future $\bmo_{t+1}$, which is not
available when memories must be recalled at inference time.
During training, however, $\bmo_{t+1}$ is observed and reveals which recalled
contexts would have best supported the prediction.
FAR uses this additional information to construct a future-aware posterior over
contexts.
Combining the recall distribution in
\Cref{eq:recall_distribution}
with the predictive utility in
\Cref{eq:predictive_utility},
Bayes' rule gives
\begin{keyformula}
\begin{align}
q_{\theta,\phi}
\!\left(
\cC_t
\mid
\bmo_{t+1},
\cM_{t-1},
\cZ_{t-1},
\cR_t,
\cQ_t
\right)
\propto
\exp\!\left[
s_\phi(\cC_t\mid\cZ_{t-1},\cR_t)
+
u_\theta(\cC_t\mid\bmo_{t+1},\cQ_t)
\right].
\label{eq:future_posterior}
\end{align}
\end{keyformula}

This posterior combines two signals:
$s_\phi$ captures how relevant a context appears from information available
before the future is observed, while $u_\theta$ provides predictive credit from
the realized future.
Accordingly, $q_{\theta,\phi}$ serves as a future-aware teacher for the
future-blind recall distribution.
As shown in \Cref{app:retriever_objective}, this posterior provides the
maximum-likelihood training target for recall: it favors contexts that better
explain the observed future while retaining the relevance already inferred by
the future-blind retriever.
We therefore train $r_\phi$ to match this fixed target:
\begin{align}
\cL_{\ret}(\phi)
\coloneqq
\bbE\!\left[
\KL\!\left(
\sg\!\left[
q_{\theta,\phi}
\right]
\,\middle\|\,
r_\phi
\right)
\right],
\label{eq:posterior_distillation}
\end{align}
where $\sg[\cdot]$ is the stop-gradient, treating
$q_{\theta,\phi}$ as fixed during the retriever update.
This transfers predictive credit from the observed future into a retriever that
operates without the future at inference.

\subsection{Adaptively Learning Which Cues to Trust}\label{sec:multi_cue}

The previous subsection specifies what the context relevance score $s_\phi$ should learn through future-aware predictive credit.
At inference, however, this credit is unavailable, so $s_\phi$ must estimate relevance from the retrieval cues contained in $\cZ_{t-1}$ and $\cR_t$.
FAR is agnostic to the choice of cues: any signal associated with a historical observation that helps locate useful memories can be used for recall.
In our experiments, we consider time, pose, visual representations, and audio representations, with the available subset depending on the environment.
Because the reliability of these cues can vary across queries, FAR learns both cue-specific relevance and how strongly each available cue should contribute to the final context score.

\textbf{Cue-specific relevance.}
Let $\bmz_i^m$ denote the component of $\bmz_i$ corresponding to retrieval cue $m$.
We define the corresponding cue history and retrieval query as $\cZ_{t-1}^m\coloneqq(\bmz_1^m,\ldots,\bmz_{t-1}^m)$ and $\cR_t^m\coloneqq(\bmz_t^m,\bma_t)$.
For each historical observation $\bmo_i$, cue $m$ produces an observation-level relevance score
\begin{align}
s_{\phi,i}^{m}
\coloneqq
s_\phi^{m}\!\left(\bmz_i^{m};\cZ_{t-1}^{m},\cR_t^{m}\right).
\label{eq:cue_specific_score}
\end{align}
This score estimates how relevant the corresponding memory appears when viewed through cue $m$ alone.
Because different cues may produce scores on different numerical scales, we standardize each cue's scores across the episodic memory and denote the resulting scores by $\widetilde{s}_{\phi,i}^{m}$.

\textbf{Adaptive cue fusion.}
FAR combines the cue-specific scores using query-dependent weights,
\begin{align}
\textstyle 
s_{\phi,i} =
\sum_m
\lambda_{\phi,t}^{m}\,
\widetilde{s}_{\phi,i}^{m},
\qquad
\sum_m \lambda_{\phi,t}^{m}=1,
\label{eq:multi_cue_score}
\end{align}
where $\lambda_{\phi,t}^{m}$ is produced by a learned gate from the cue-specific retrieval signals available for the current query.
The fused observation-level scores define the context relevance score in \Cref{eq:recall_distribution} as
\begin{align}
\textstyle
s_\phi(\cC_t\mid\cZ_{t-1},\cR_t)
\coloneqq
\sum_{\bmo_i\in\cC_t}
s_{\phi,i}.
\label{eq:context_score}
\end{align}
Together, \Cref{eq:multi_cue_score,eq:context_score} parameterize the recall distribution $r_\phi(\cC_t\mid\cZ_{t-1},\cR_t)$ used by the future-aware posterior in \Cref{eq:future_posterior}.
For fixed-size recall with $|\cC_t|=K$, the maximum-score context is obtained by selecting the $K$ observations with the largest fused relevance scores,
\begin{align}
\textstyle \cC_{t,\topk} = \arg\max_{\cC_t\subseteq\cM_{t-1}}
s_\phi(\cC_t\mid\cZ_{t-1},\cR_t) \quad \text{so that} \quad |\cC_t|=K.
\label{eq:topk_recall}
\end{align}

\subsection{Video Diffusion World Model Instantiation}
\label{sec:video_instantiation}

We now instantiate FAR with a video diffusion world model, and we describe the central details here. A precise description of implementation is provided in \Cref{app:video_far_details}.

\textbf{Cacheable retrieval encoders.}
For high-dimensional cues such as vision and audio, cue-specific encoders map historical cues to compact keys and the current cue and action to a query embedding.
We contrastively pretrain and freeze the memory-side encoders so that historical keys can be computed once and cached, while lightweight query-side adapters and cue-fusion components remain trainable through FAR.

\textbf{Diffusion-based predictive utility.}
Because the exact likelihood in \Cref{eq:predictive_utility} is expensive for diffusion models, we approximate predictive utility by the negative diffusion prediction loss~\citep{ho2020denoising,kingma2021variational,song2021maximum,lai2025principles} of the realized future conditioned on each candidate memory, averaged over four diffusion timesteps.
Memories that yield lower prediction loss receive greater future-aware credit.

\textbf{Temporally structured recall.}
To avoid spending multiple Top-$K$ slots on redundant nearby frames, we partition memory into temporal chunks, retain the highest-scoring observation from each chunk, and recall the $K$ highest-scoring representatives.

\begin{figure}[t]
\vspace{-1mm}
\centering
\includegraphics[width=1.0\linewidth]{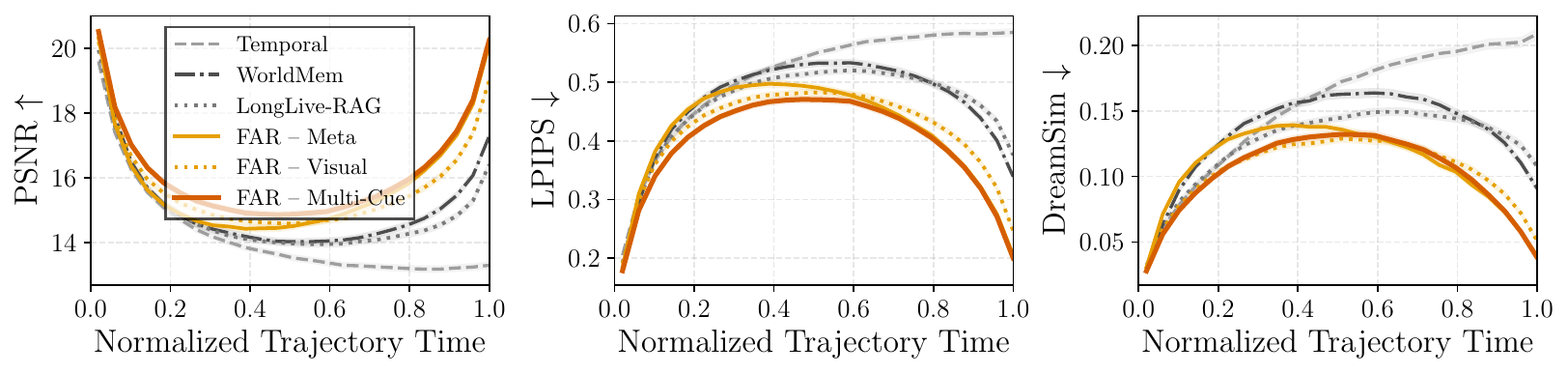}
\vspace{-6mm}
\caption{\textbf{LoopNav rollout quality.} We report frame-wise PSNR$\uparrow$, LPIPS$\downarrow$, and DreamSim$\downarrow$ between generated and ground-truth return trajectories as a function of normalized trajectory time. Curves show the mean across trajectories, with shaded regions indicating 95\% bootstrap confidence intervals. Errors are highest near the middle of the return phase, where relevant observations are typically farthest in time and memory is most sparse.}
\vspace{-4mm}
\label{fig:loopnav_rollout_metrics}
\end{figure}

\vspace{-3mm}
\section{Experiments}
\vspace{-3mm}

\begin{wrapfigure}{r}{0.25\linewidth}
\centering
\vspace{-4mm}
\includegraphics[width=1.0\linewidth]{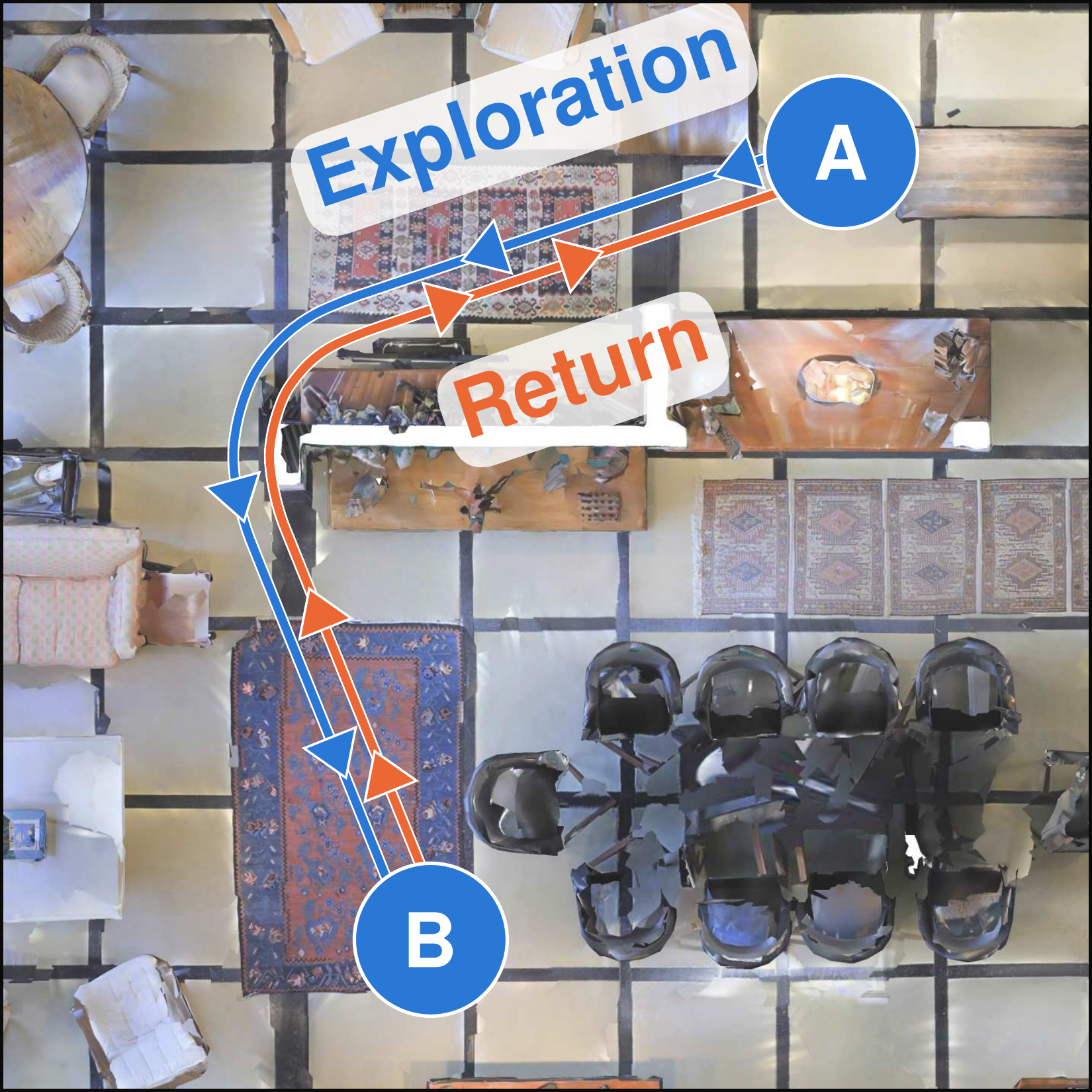}
\vspace{-5mm}
\caption{\textbf{Illustration of a data trajectory.}}
\label{fig:map}
\vspace{-2mm}
\end{wrapfigure}

We present experiments across three complementary world-modeling settings:
\emph{LoopNav}~\citep{lian2025loopnav}, \emph{SoundSpaces}~\citep{chen2020soundspaces}, and \emph{AI2-THOR}~\citep{kolve2017ai2thor}.
LoopNav evaluates consistency in a static Minecraft environment using metadata and visual retrieval cues, while SoundSpaces evaluates realistic indoor navigation where metadata and spatial audio provide complementary signals under partial observability.
AI2-THOR evaluates interactive household environments with manipulable objects, where interactions change the world state and memories may become stale.
Each trajectory consists of an \textit{exploration phase} and a \textit{return phase}.
The world model generates the return-phase video using exploration-phase observations as its memory pool for context retrieval.
\Cref{fig:map} shows exploration and return paths on SoundSpaces.

\vspace{-1mm}
\subsection{LoopNav -- FAR Outperforms Hand-Designed Recall}
\vspace{-2mm}

We compare against three retrieval baselines: the temporal baseline~\citep{bar2025nwm}, which uses the most recent observations; WorldMem~\citep{xiao2025worldmem}, which retrieves frames with high field-of-view (FOV) overlap with the goal; and LongLive-RAG~\citep{hu2026longliverag}, which retrieves by reconstructive embedding similarity.
We evaluate three variants of our method using metadata cues (time and pose), visual cues, or their adaptive fusion.

As shown in \Cref{fig:loopnav_rollout_metrics}, the temporal baseline performs worst, while WorldMem and LongLive-RAG benefit from geometry- and appearance-based retrieval.
At loop closure, our learned visual retriever outperforms LongLive-RAG ($17\%$ lower DreamSim), and our learned metadata retriever outperforms WorldMem ($19\%$ lower DreamSim), despite using the same respective cue types.
\Cref{fig:teaser} illustrates why: WorldMem retrieves a frame with high FOV overlap but a heavily occluded view of the goal, whereas our learned metadata retriever selects a frame with a clearer predictive view.
This highlights a key limitation of hand-crafted relevance rules: cue similarity does not necessarily imply predictive utility.
Finally, the \emph{Multi-Cue} variant in \Cref{fig:loopnav_rollout_metrics}, which fuses metadata and visual cues, shows that FAR can exploit complementary retrieval signals.
Its rollout error closely follows the better of the metadata-only and visual-only variants across the horizon, suggesting that adaptive fusion emphasizes whichever cue is more informative for the current prediction.

\begin{figure}[t]
\vspace{-3mm}
\centering
\includegraphics[width=1.0\linewidth]{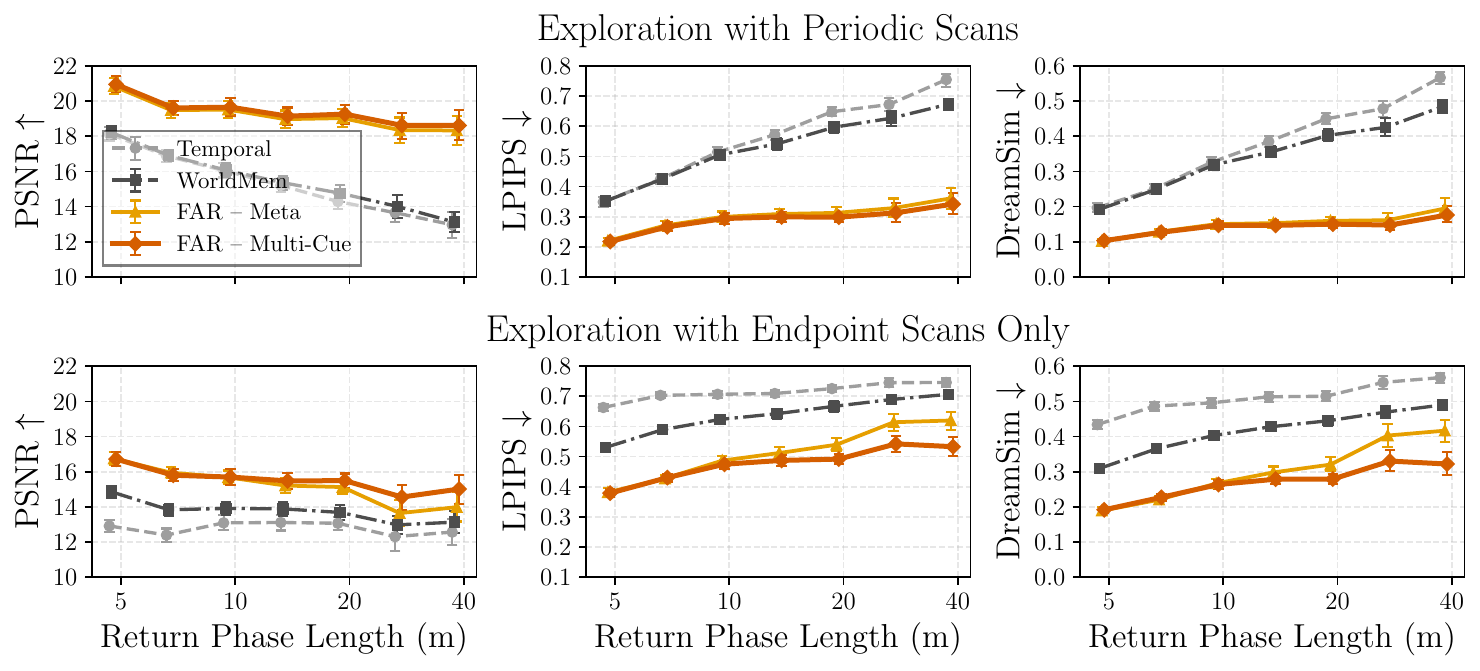}
\vspace{-6mm}
\caption{\textbf{SoundSpaces rollout quality.} We report PSNR, LPIPS, and DreamSim, where the error is averaged over predicted frames along return trajectory. Thus, the $x$-axis denotes the length of the trajectory over which the rollout is evaluated, \emph{not the rollout horizon}. \textbf{Top:} results on the corpus where the agent performs periodic \(360^\circ\) scans during exploration. \textbf{Bottom:} results on the corpus where the agent scans at the endpoints, i.e., at the beginning and end of exploration.}
\vspace{-4mm}
\label{fig:ss_rollout_metrics}
\end{figure}

\begin{figure}[t]
\centering
\includegraphics[width=1.0\linewidth]{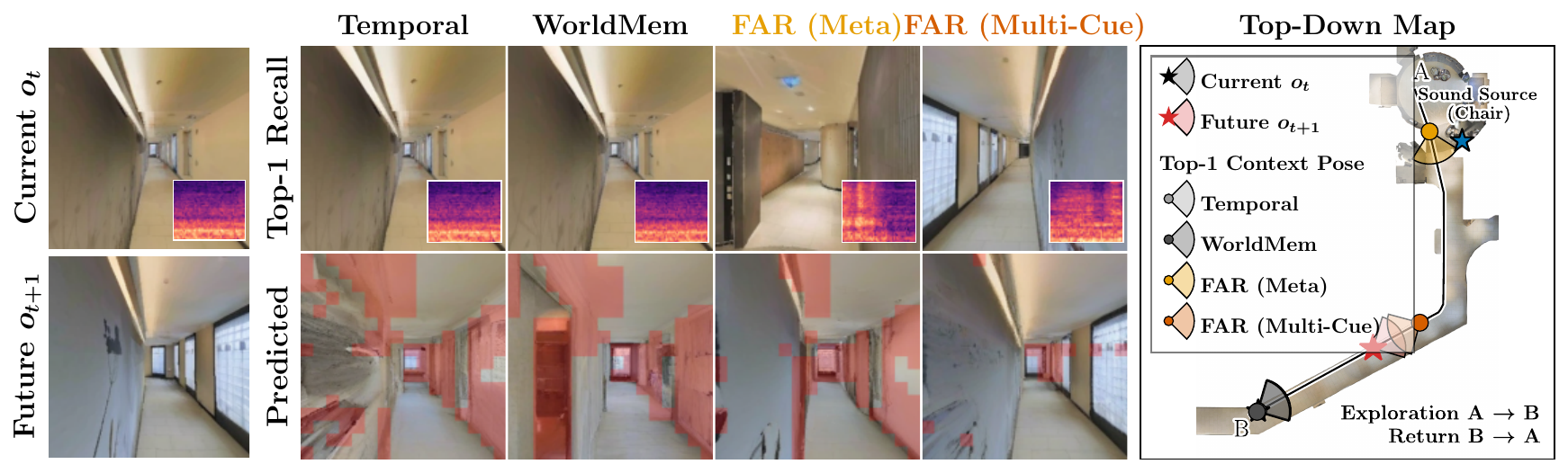}
\vspace{-6mm}
\caption{\textbf{Retrieval comparison on SoundSpaces.} Given the observation and goal view, we visualize the top-1 context retrieved by each method and the generated goal frame, with regions of large error overlaid in \textcolor{red}{\textbf{red}}. The spectrograms show the audio cue associated with each state. The map shows the exploration and return trajectories with the memories selected by each method.}
\vspace{-4mm}
\label{fig:ss_retrieval}
\end{figure}

\vspace{-1mm}
\subsection{SoundSpaces -- FAR Adaptively Learns Which Cues to Trust}

We next evaluate FAR on SoundSpaces, where we construct an indoor navigation dataset augmented with spatial audio.
We generate two corpora that differ in the density of available episodic memories: one performs periodic \(360^\circ\) scans during exploration, while the other scans only at the exploration endpoints.
The former thus provides substantially denser historical coverage from which the world model can retrieve.
We stratify evaluation by return-phase length, i.e., the physical distance traversed during the trajectory over which rollout quality is measured.
Longer return trajectories are generally more challenging, as they require prediction through more rooms, corridors, and viewpoint changes.

\Cref{fig:ss_rollout_metrics} shows that FAR consistently outperforms the temporal baseline and WorldMem across both corpora.
With periodic scans, metadata alone provides a strong retrieval signal, and Multi-Cue fusion of metadata and audio yields modest gains.
Under the sparser endpoint-scan setting, however, the advantage of Multi-Cue grows with return-phase length, particularly in LPIPS and DreamSim.
This suggests that metadata often suffices when relevant visual memories are densely available, whereas audio provides a valuable complementary cue for long trajectories or sparse histories.

\Cref{fig:ss_retrieval} illustrates how audio can complement metadata when geometric relevance alone is insufficient.
The agent is traversing a corridor and must predict a view farther down.
Temporal retrieval selects the most recent observation, which looks in the correct direction but provides limited information about the distant portion of the corridor.
WorldMem selects the same context because it has high FOV overlap with the goal pose, and consequently suffers from the limitation.
FAR with metadata retrieves an observation taken farther along the trajectory and oriented toward the goal, but the view is obstructed by the corridor geometry.
Incorporating audio cue instead retrieves a context with a clearer view of the region around the goal, leading to a more faithful prediction.
The inset spectrograms show the audio associated with each retrieved state, highlighting how spatial audio can provide a complementary retrieval cue when pose and visual geometry alone are ambiguous.

\begin{figure}[t]
\vspace{-2mm}
\centering
\includegraphics[width=1.0\linewidth]{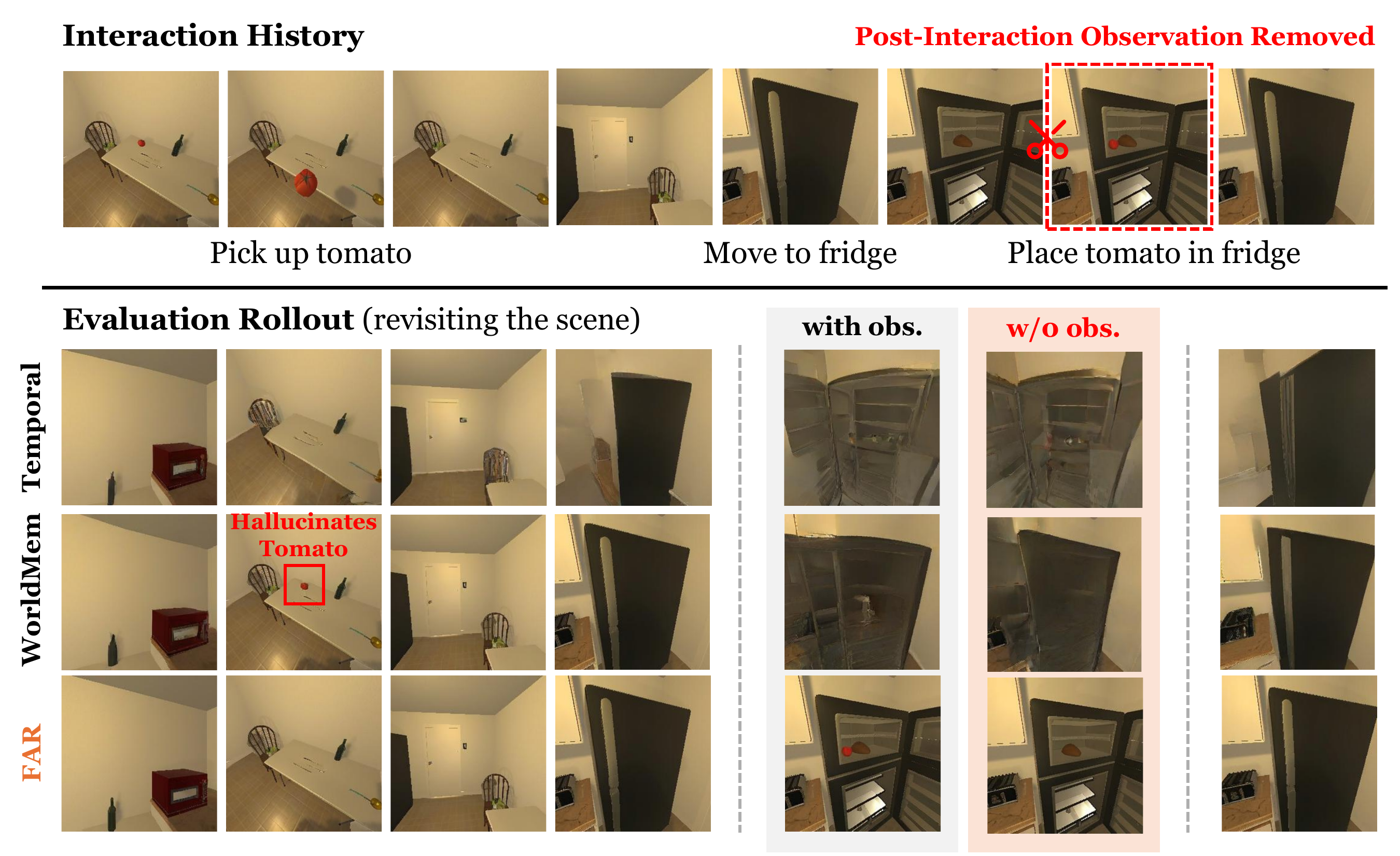}
\vspace{-6mm}
\caption{\textbf{History-dependent counterfactual prediction in AI2-THOR.}
We compare two trajectories with the same current closed-fridge observation and opening action but different histories: one history contains the event of placing a tomato inside the fridge, while the other does not.}
\vspace{-4mm}
\label{fig:ai2thor_counter}
\end{figure}

\vspace{-2mm}
\subsection{AI2-THOR -- FAR Recalls the Right State as the World Changes}

\begin{wrapfigure}{r}{0.35\linewidth}
\centering
\vspace{-6mm}
\includegraphics[width=0.9\linewidth]{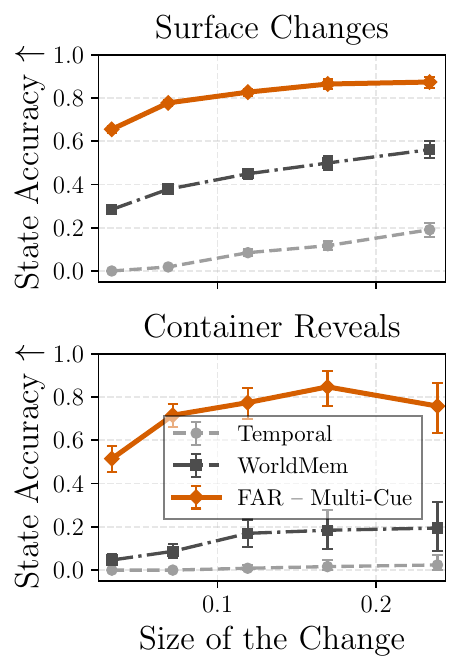}
\vspace{-4mm}
\caption{\textbf{State accuracy under object manipulation.}}
\label{fig:ai2thor_acc}
\vspace{-6mm}
\end{wrapfigure}

We next evaluate FAR in AI2-THOR environments where interactions change the state of the world.
During exploration, the agent randomly moves objects between surfaces and containers, producing multiple observations of the same location that may correspond to conflicting world states.
During the return phase, the agent revisits these locations and must generate observations consistent with the state established by the preceding interactions.
This setting is challenging for retrieval based primarily on recency or geometry, since a spatially well-matched memory may depict a stale state.

\textbf{State rendering accuracy.}
We evaluate whether the generated rollout reflects the current world state rather
than a stale state from earlier in the trajectory.
Specifically, we report the fraction of changed states satisfying
$d(\bmo_{\mathrm{GT}},\bmo_{\mathrm{Pred}})
<
d(\bmo_{\mathrm{GT}},\bmo_{\mathrm{Stale}})$,
where $\bmo_{\mathrm{GT}}$ is the ground-truth frame,
$\bmo_{\mathrm{Pred}}$ is the predicted rollout frame,
$\bmo_{\mathrm{Stale}}$ is a pose-matched observation depicting the
pre-interaction state, and $d$ denotes LPIPS on a crop around the changed region.

\Cref{fig:ai2thor_acc}~stratifies accuracy by state-change magnitude,
$d(\bmo_{\mathrm{GT}},\bmo_{\mathrm{Stale}})$.
We consider two cases:
\emph{surface changes}, where the updated state is directly visible upon revisit,
and \emph{container reveals}, where it becomes observable only after opening a
container.
Multi-Cue FAR, fusing metadata and vision, substantially outperforms temporal
and geometry-based recall across change magnitudes in both settings.
The advantage is especially pronounced for container reveals, where the current
observation contains little information about hidden contents and accurate
prediction therefore depends strongly on recalling the relevant interaction
history.
Performance decreases for all methods when current and stale states differ only
locally, consistent with the limited sensitivity of the diffusion prediction
objective to small visual changes.

\textbf{History-dependent counterfactual prediction.}
\Cref{fig:ai2thor_counter} shows a paired counterfactual example in which the current observation and action are held fixed, while the preceding interaction history differs.
In one history, the agent places a tomato inside the refrigerator; in the other, it does not.
When the agent later observes the same closed refrigerator and executes the same opening action, FAR generates different futures consistent with the corresponding histories, rendering the tomato inside the refrigerator only when it was previously placed there.
This shows that the predicted future is not determined by the current observation alone, but depends on recalling the relevant past interaction.
In contrast, Temporal and WorldMem fail to preserve the interaction-dependent state, with WorldMem additionally hallucinating the tomato at its stale table location as a result of prioritizing stale contexts with high FOV overlap over up-to-date contexts.

\begin{figure}[t]
\vspace{-1mm}
\centering
\includegraphics[width=0.3\linewidth]{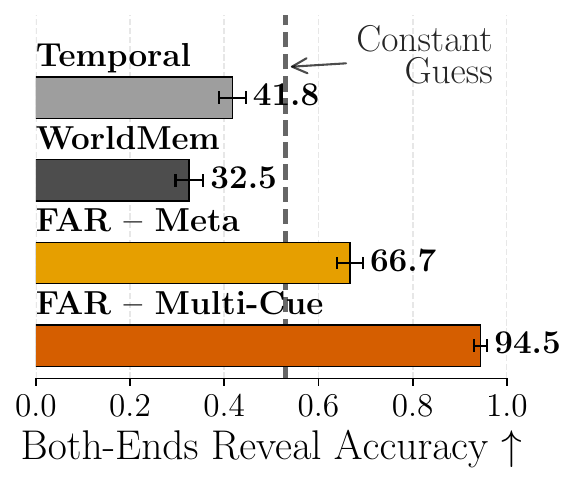} \hfill
\includegraphics[width=0.68\linewidth]{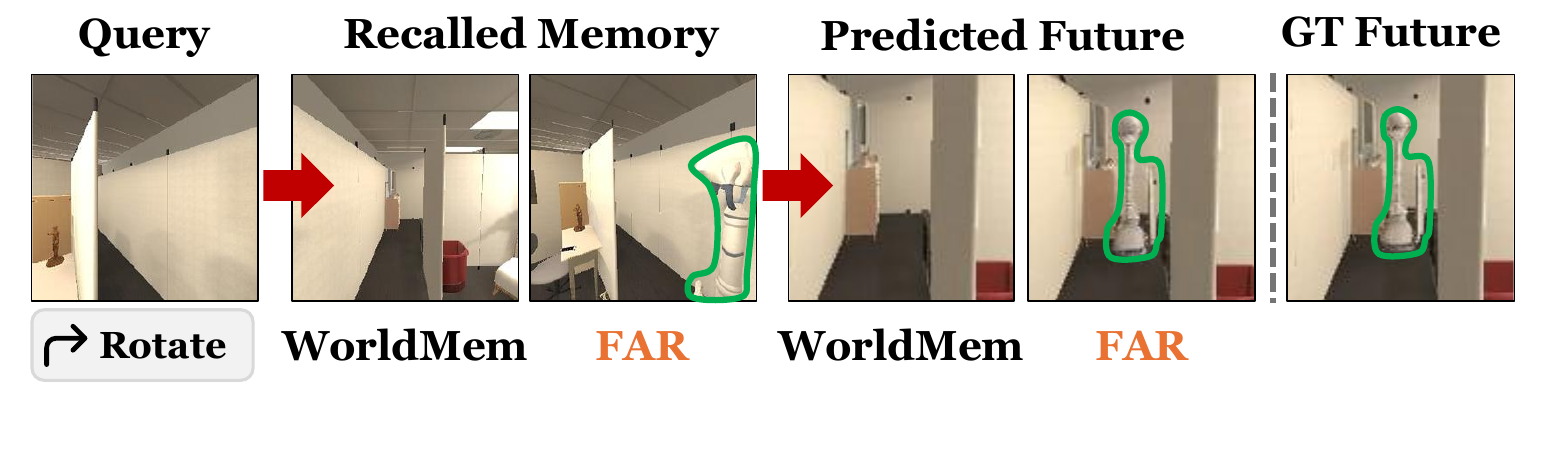}
\vspace{-2mm}
\caption{\textbf{Off-scene dynamics prediction.}
\textbf{Left:} trajectory-level accuracy for predicting the future location of the independently moving agent $A_2$.
\textbf{Right:} qualitative example showing that FAR recalls a motion-informative episode containing $A_2$, and predicts the correct future state with $A_2$. WorldMem retrieves a spatially relevant but dynamically uninformative memory, failing to render $A_2$. All $A_2$ appearances are outlined in {\color{ForestGreen}\textbf{green}} for visual aid.}
\vspace{-4mm}
\label{fig:ai2thor_dyn}
\end{figure}

\textbf{Off-scene dynamics prediction.}
We test whether FAR can predict off-scene dynamics in a two-agent corridor setting.
$A_1$ is the observer, while $A_2$ patrols back and forth and is only intermittently visible through a doorway.
The task is to predict $A_2$'s location when $A_1$ exits the room and looks down the corridor.
To capture motion, each recalled context consists of two observations separated by $1.5$ seconds.
We compare FAR using only time and pose metadata with a \emph{Multi-Cue} variant that fuses metadata with an $A_2$ observation cue.
As shown in \Cref{fig:ai2thor_dyn}, Temporal and WorldMem achieve $41.9\%$ and $32.4\%$ accuracy, while FAR with metadata alone reaches $67.1\%$, indicating that future-aware training can learn when informative crossings are likely to occur.
Multi-Cue further improves accuracy to $94.6\%$ by more precisely identifying motion-informative episodes.
Qualitatively, WorldMem retrieves a geometrically relevant but dynamically uninformative memory, whereas Multi-Cue FAR recalls an episode containing $A_2$ and predicts its future off-scene location correctly.

\vspace{-2mm}
\subsection{Ablation Study}
\vspace{-1mm}

\begin{wraptable}{r}{0.58\linewidth}
\begin{minipage}{0.58\textwidth}
\vspace{-4mm}
\centering
\resizebox{1.0\textwidth}{!}{
\begin{tabular}{lccc}
\toprule
\textbf{Retrieval} & \textbf{PSNR}$\uparrow$ & \textbf{LPIPS}$\downarrow$ & \textbf{DreamSim}$\downarrow$ \\
\cmidrule{1-4}
Temporal (NWM) & $13.229_{\pm 0.114}$ & $0.582_{\pm 0.007}$ & $0.200_{\pm 0.004}$ \\
LongLive-RAG   & $14.847_{\pm 0.121}$ & $0.463_{\pm 0.006}$ & $0.133_{\pm 0.003}$ \\
WorldMem       & $15.372_{\pm 0.112}$ & $0.448_{\pm 0.006}$ & $	0.130_{\pm 0.003}$ \\
\cmidrule{1-4}
\multicolumn{4}{l}{\textbf{{FAR} with Vision Cue}} \\
Enc. Pretrained           & $16.248_{\pm 0.112}$ & $0.390_{\pm 0.006}$ & $0.100_{\pm 0.003}$ \\
+ MLP Adapter             & $16.480_{\pm 0.108}$ & $0.374_{\pm 0.006}$ & $0.091_{\pm 0.002}$ \\
\quad + Meta, $\lambda=0.5$ & $16.912_{\pm 0.117}$ & $0.353_{\pm 0.005}$ & $0.088_{\pm 0.002}$ \\
\quad + Meta, $\lambda$ learned & $\mathbf{17.267_{\pm 0.116}}$ & $\mathbf{0.337_{\pm 0.005}}$ & $\mathbf{0.082_{\pm 0.002}}$ \\
\bottomrule
\end{tabular}}
\vspace{-2mm}
\caption{\textbf{Ablation on LoopNav.} We report average loop closure errors for various configurations.}
\label{tab:loopnav_ablation}
\vspace{-2.5mm}
\end{minipage}
\end{wraptable}

\Cref{tab:loopnav_ablation} analyzes each component of FAR.
A frozen pretrained vision encoder provides a strong retrieval signal, substantially outperforming baselines across three rollout metrics.
Adapting the query representation with MLP further improves performance from $16.25$ to $16.48$ PSNR, with gains in LPIPS and DreamSim, indicating that visual similarity benefits from task-specific adaptation toward predictive relevance.
Incorporating metadata cues yields a larger improvement: fixed fusion with $\lambda=0.5$ increases PSNR to $16.91$.
Finally, learning the fusion weights achieves the best performance across all metrics, showing that metadata and visual cues are complementary and adaptively weighting their reliability is preferable to uniform fusion.

\vspace{-2mm}
\section{Conclusion and Limitations}
\vspace{-1mm}

We introduced FAR, which learns which episodic
memories support prediction and which retrieval cues to trust.
FAR uses the observed future for predictive credit during training, while
remaining future-blind at inference.
By learning cue-specific relevance and query-dependent cue weights, FAR improves
long-horizon prediction across navigation, multi-cue, and changing-state
settings, supporting predictive utility as a principled basis for episodic
memory recall in world models.

Our work focuses on recall from an external episodic memory, leaving memory
writing, compression, forgetting, and higher-order interactions among recalled
memories to future work.
FAR also requires informative retrieval cues and additional training-time
computation for predictive-utility evaluation.
Our diffusion-loss utility may underweight semantically important local
changes; alternative task-aware utility functions are a promising direction.
Finally, our experiments use controlled simulations, motivating evaluation in
richer real-world settings and integration with learned memory formation and
persistent-state representations.

\section*{Acknowledgments}

We sincerely thank Koichi Saito for his thoughtful insights and engaging discussions throughout this project.

\bibliography{iclr2027_conference}
\bibliographystyle{iclr2027_conference}

\clearpage
\newpage

\appendix

\begin{center}
    \textbf{\Large Supplementary Material}
\end{center}

\section{Additional Discussion of Related Work} 
\label{app:related_work}

\begin{table}[!ht]
\centering
\setlength{\tabcolsep}{3.5pt}
\renewcommand{\arraystretch}{0.95}
\resizebox{\columnwidth}{!}{
\begin{tabular}{lccccc}
\toprule
\textbf{Method} &
\makecell{\textbf{World}\\\textbf{Model}} &
\makecell{\textbf{External}\\\textbf{Retriever}} &
\makecell{\textbf{Prediction}\\\textbf{Supervised}} &
\makecell{\textbf{Adaptive}\\\textbf{Cue Fusion}} &
\makecell{\textbf{Cue Options}} \\
\midrule
FramePack~\citep{zhang2025framepack} & \xmark & \cmark & \xmark & \xmark & Time, Vision \\
LongLive-RAG~\citep{hu2026longliverag} & \xmark & \cmark & \xmark & \xmark & Vision \\
WorldMem~\citep{xiao2025worldmem} & \cmark & \cmark & \xmark & \xmark & Time, Pose \\
Context-as-Memory~\citep{yu2025cam} & \cmark & \cmark & \xmark & \xmark & Pose \\
VRAG~\citep{chen2025vrag} & \cmark & \cmark & \xmark & \xmark & Pose \\
SPMem~\citep{wu2025spmem} & \cmark & \cmark & \xmark & \xmark & Pose \\
I3DM~\citep{li2026i3dm} & \cmark & \cmark & \xmark & \xmark & Pose, Vision \\
Matrix-Game 3.0~\citep{wang2026matrixgame3} & \cmark & \cmark & \xmark & \xmark & Pose \\
\midrule
Mixture-of-Contexts~\citep{cai2026moc} & \xmark & \xmark & \cmark & \xmark & Vision, Text \\
MemLearner~\citep{yu2026memlearner} & \cmark & \xmark & \cmark & \xmark & Pose, Vision \\
Compression-and-Retrieval~\citep{peng2026car} & \cmark & \xmark & \cmark & \xmark & Pose, Vision \\
\midrule
\rowcolor{black!6}
\textbf{Ours} & \cmark & \cmark & \cmark & \cmark & Time, Pose, Vision, Audio \\
\bottomrule
\end{tabular}}
\caption{\textbf{Conceptual comparison of episodic memory recall for video and world models.} \textbf{World Model} indicates whether the method is designed for action-conditioned future prediction; \textbf{External Retriever} whether memory selection is performed outside the generator; \textbf{Prediction Supervised} whether retrieval is optimized using the downstream prediction objective; \textbf{Adaptive Cue Fusion} whether the retriever dynamically weights multiple retrieval cues for each query rather than using a fixed cue or predefined combination; and \textbf{Cue Options} lists the information sources used to determine memory relevance in the respective papers.}
\label{tab:related_memory_retrieval}
\end{table}

\paragraph{Episodic memory retrieval for world models.}
Recent video and world models augment limited context windows with external memories that preserve historical frames or latent representations for later retrieval~\citep{xiao2025worldmem,yu2025cam,wu2025spmem,li2026i3dm,hu2026longliverag}.
As summarized in \Cref{tab:related_memory_retrieval}, existing external retrievers typically rely on fixed notions of relevance, including temporal proximity, pose or field-of-view overlap, geometry-aware similarity, or learned visual embeddings.
These approaches demonstrate that selectively revisiting non-local observations can improve long-horizon consistency, but the retrieval rule itself is generally not optimized using the downstream world-model prediction objective.
FAR instead learns memory relevance directly from predictive utility and further learns query-dependent fusion over heterogeneous temporal, spatial, visual, and auditory cues.
Thus, FAR learns both \emph{which memories are predictively useful} and \emph{which cues should be trusted to find them}.

\paragraph{Internal long-context memory.}
A complementary line retains long-range history within the generator and learns how to route or compress it during prediction.
Mixture of Contexts sparsely routes queries to informative history chunks, MemLearner queries context using the video generator itself, and CaR combines attention-based memory retrieval with context compression~\citep{cai2026moc,yu2026memlearner,peng2026car}.
Recent world models also use recurrent or hybrid linear attention to propagate information over long sequences, including L2World and SANA-WM~\citep{wang2026context,zhu2026sanawm}.
These internal mechanisms and external episodic recall are complementary: the former efficiently summarize or route long histories within the model, while the latter preserves individually addressable observations that can be selectively revisited.

\paragraph{Persistent-state representations.}
Rather than retrieving individual observations, another family of approaches maintains an explicit or latent state that is updated as the environment evolves.
PERSIST evolves a latent 3D scene representation, Spatia maintains an updatable 3D point-cloud memory, and EgoSim continuously updates a 3D scene state across embodied interactions~\citep{garcin2026persist,zhao2026spatia,hao2026egosim}.
Persistent state provides a compact, viewpoint-independent representation of geometry and world changes, whereas episodic memory preserves past observations as individually recoverable evidence.
The two are complementary: persistent state summarizes the model's current belief about the world, while episodic recall can recover specific past evidence when that state is incomplete or uncertain.

\paragraph{Learned retrieval and retrieval-augmented generation.}
Retrieval-augmented generation (RAG) treats retrieved documents as latent context and learns retrieval jointly with downstream generation~\citep{lewis2020rag,sachan2021emdr}.
In particular, \emdr~uses the reader likelihood of the observed target to provide posterior supervision for discrete document retrieval~\citep{sachan2021emdr}.
FAR draws on this principle but applies it to an agent's own episodic history: the realized future provides predictive credit for candidate memories, and this supervision trains a retriever that operates without access to the future at inference.
Unlike document RAG, FAR additionally learns to combine heterogeneous temporal, spatial, visual, and auditory cues to locate useful memories.

\paragraph{Long-horizon world models.}
A broader line of work targets long-horizon consistency by changing how context or state is represented rather than explicitly learning episodic recall.
FramePack compresses frame histories according to importance and introduces anti-drifting mechanisms, while MilliVid uses hierarchical latents and coarse-to-fine rollout to preserve long-range structure efficiently~\citep{zhang2025framepack,preetam2026millivid}.
Seoul World Model instead repeatedly re-grounds generation using nearby real-world observations to stabilize long trajectories~\citep{seo2026swm}.
These approaches address complementary challenges of context scaling, drift, and external grounding; FAR focuses specifically on learning which observations from the agent's own history are useful for the current prediction.

\section{Additional Theoretical Details}

In this section, we provide more theoretical and practical details about FAR. First, we provide the proof of \Cref{prop:relevance-bound} in \Cref{app:predictive_utility_derivation}. 
We summarize the overall pseudo code as \Cref{alg:joint_training}.

\subsection{Derivation of Predictive Relevance Bound}
\label{app:predictive_utility_derivation}

\begin{proof}
Consider the joint distribution induced by the environment data distribution and the retriever,
\begin{align*}
p_\phi(\bmo_{t+1},\cM_{t-1},\cZ_{t-1},\cR_t,\cQ_t,\cC_t)
\coloneqq
p_{\rm data}(\bmo_{t+1},\cM_{t-1},\cZ_{t-1},\cR_t,\cQ_t)\,
r_\phi(\cC_t\mid\cZ_{t-1},\cR_t).
\end{align*}
The subscript $\phi$ reflects that the recalled context $\cC_t$ is selected by the retriever $r_\phi$.

Introducing the retriever does not change the marginal distribution of the future and prediction query.
Because $r_\phi(\cdot\mid\cZ_{t-1},\cR_t)$ is normalized,
\begin{align*}
p_\phi(\bmo_{t+1},\cQ_t) 
&=
\int
p_{\rm data}(\bmo_{t+1},\cM_{t-1},\cZ_{t-1},\cR_t,\cQ_t)
\sum_{\cC_t}
r_\phi(\cC_t\mid\cZ_{t-1},\cR_t)
\,\mathrm{d}\cM_{t-1}\,\mathrm{d}\cZ_{t-1}\,\mathrm{d}\cR_t
\\
&=
p_{\rm data}(\bmo_{t+1},\cQ_t).
\end{align*}
Hence,
\begin{align*}
p_\phi(\bmo_{t+1}\mid\cQ_t)
=
p_{\rm data}(\bmo_{t+1}\mid\cQ_t).
\end{align*}

By the definition of conditional mutual information,
\begin{align*}
I_\phi(\bmo_{t+1};\cC_t\mid\cQ_t)
=
\bbE_{p_\phi}
\left[
\log
\frac{
p_\phi(\bmo_{t+1}\mid\cC_t,\cQ_t)
}{
p_\phi(\bmo_{t+1}\mid\cQ_t)
}
\right]
=
\bbE_{p_\phi}
\left[
\log p_\phi(\bmo_{t+1}\mid\cC_t,\cQ_t)
-
\log p_{\rm data}(\bmo_{t+1}\mid\cQ_t)
\right].
\end{align*}

Introducing any predictive distribution $p_\theta(\bmo_{t+1}\mid\cC_t,\cQ_t)$ and adding and subtracting its log-density gives
\begin{align*}
I_\phi(\bmo_{t+1};\cC_t\mid\cQ_t)
&=
\bbE_{p_\phi}
\left[
\log p_\theta(\bmo_{t+1}\mid\cC_t,\cQ_t)
-
\log p_{\rm data}(\bmo_{t+1}\mid\cQ_t)
\right]
\\
&\quad+
\bbE_{p_\phi(\cC_t,\cQ_t)}
\left[
\KL\!\left(
p_\phi(\bmo_{t+1}\mid\cC_t,\cQ_t)
\,\middle\|\,
p_\theta(\bmo_{t+1}\mid\cC_t,\cQ_t)
\right)
\right].
\end{align*}

The second term is nonnegative by the nonnegativity of KL divergence.
Therefore,
\begin{align*}
I_\phi(\bmo_{t+1};\cC_t\mid\cQ_t)
\geq
\bbE_{\substack{
(\bmo_{t+1},\cM_{t-1},\cZ_{t-1},\cR_t,\cQ_t)\sim p_{\rm data}\\
\cC_t\sim r_\phi(\cdot\mid\cZ_{t-1},\cR_t)
}}
\left[
\log p_\theta(\bmo_{t+1}\mid\cC_t,\cQ_t)
-
\log p_{\rm data}(\bmo_{t+1}\mid\cQ_t)
\right],
\end{align*}
which is exactly \Cref{eq:predictive_information_bound}.

Moreover, the decomposition above shows that the gap in the bound is
\begin{align*}
\bbE_{p_\phi(\cC_t,\cQ_t)}
\left[
\KL\!\left(
p_\phi(\bmo_{t+1}\mid\cC_t,\cQ_t)
\,\middle\|\,
p_\theta(\bmo_{t+1}\mid\cC_t,\cQ_t)
\right)
\right].
\end{align*}
Thus, the bound is tight when the predictive world model matches the conditional distribution of the future induced by the data distribution and retriever, almost surely over $(\cC_t,\cQ_t)$.
\end{proof}

\subsection{Latent-Context Maximum-Likelihood View of the Retriever Update}
\label{app:retriever_objective}

We show that the future-aware posterior and retriever objective in
\Cref{eq:future_posterior,eq:posterior_distillation} follow directly from
maximum likelihood in the latent-context predictive model
\Cref{eq:latent_context_model}.
This also clarifies why FAR matches a posterior that combines predictive utility
with the current recall distribution, rather than a target based on predictive
utility alone.

For readability, within this subsection we abbreviate
\begin{align*}
r_\phi(\cC_t)
\coloneqq
r_\phi(\cC_t\mid\cZ_{t-1},\cR_t),\quad\text{and}\quad
u_\theta(\cC_t)
\coloneqq
u_\theta(\cC_t\mid\bmo_{t+1},\cQ_t),
\end{align*}
and let
\begin{align*}
\mathfrak C_t
\coloneqq
\left\{
\cC_t\subseteq\cM_{t-1}
:
|\cC_t|=K
\right\}
\end{align*}
denote the set of all valid recalled context sets, i.e., all subsets of the
episodic memory containing exactly $K$ historical observations.
For a fixed training example, \Cref{eq:latent_context_model} gives
\begin{align*}
p_{\theta,\phi}(\bmo_{t+1}\mid\cdots)
=
\sum_{\cC_t\in\mathfrak C_t}
r_\phi(\cC_t)\,
p_\theta(\bmo_{t+1}\mid\cC_t,\cQ_t)=
\sum_{\cC_t\in\mathfrak C_t}
r_\phi(\cC_t)\,
\exp u_\theta(\cC_t).
\end{align*}

The corresponding posterior over recalled contexts is
\begin{align*}
q_{\theta,\phi}(\cC_t)
\coloneqq
\frac{
r_\phi(\cC_t)\exp u_\theta(\cC_t)
}{
\sum_{\cC_t'\in\mathfrak C_t}
r_\phi(\cC_t')\exp u_\theta(\cC_t')
}.
\end{align*}
For any distribution $q$ over $\mathfrak C_t$, a direct rearrangement gives
the standard latent-variable decomposition
\begin{align}
\log p_{\theta,\phi}(\bmo_{t+1}\mid\cdots)
&=
\bbE_{\cC_t\sim q}
\left[
u_\theta(\cC_t)
\right]
-
\KL\!\left(
q
\,\middle\|\,
r_\phi
\right)
+
\KL\!\left(
q
\,\middle\|\,
q_{\theta,\phi}
\right).
\label{eq:latent_context_variational}
\end{align}
Since the final term is nonnegative, the bound is maximized at
$q=q_{\theta,\phi}$, or equivalently,
\begin{align*}
q_{\theta,\phi}
=
\underset{q}{\arg\max}\;
\left\{
\bbE_{\cC_t\sim q}
\left[
u_\theta(\cC_t)
\right]
-
\KL\!\left(
q
\,\middle\|\,
r_\phi
\right)
\right\}.
\end{align*}
Thus, the future-aware posterior increases expected predictive utility while
remaining anchored to the relevance inferred by the future-blind retriever.

Because
$r_\phi(\cC_t)\propto
\exp s_\phi(\cC_t\mid\cZ_{t-1},\cR_t)$,
the optimizer has the form
\begin{align*}
q_{\theta,\phi}(\cC_t)
\propto
\exp\!\left[
s_\phi(\cC_t\mid\cZ_{t-1},\cR_t)
+
u_\theta(\cC_t\mid\bmo_{t+1},\cQ_t)
\right],
\end{align*}
which recovers the future-aware posterior in
\Cref{eq:future_posterior}.
The realized future therefore updates the relevance already inferred from the
available retrieval cues, rather than replacing it.

This also explains why we do not construct the target from predictive utility
alone.
Normalizing only predictive utility would give
\begin{align*}
\bar q_\theta(\cC_t)
\propto
\exp u_\theta(\cC_t).
\end{align*}
For finite $\mathfrak C_t$, this distribution solves
\begin{align*}
\bar q_\theta
=
\underset{q}{\arg\max}\;
\left\{
\bbE_q[u_\theta]
+
\mathcal H(q)
\right\}=
\underset{q}{\arg\max}\;
\left\{
\bbE_q[u_\theta]
-
\KL\!\left(
q
\,\middle\|\,
\operatorname{Unif}(\mathfrak C_t)
\right)
\right\},
\end{align*}
up to the constant $\log|\mathfrak C_t|$.
Hence, a utility-only target implicitly replaces the query-dependent recall
distribution $r_\phi$ with a uniform prior over contexts, discarding the
relevance already inferred from the current query and retrieval cues.
For example, if all contexts have equal predictive utility, then
$q_{\theta,\phi}=r_\phi$, so the realized future provides no evidence for
changing the current recall preference.
In contrast, $\bar q_\theta$ becomes uniform and would alter the retriever
despite receiving no differential predictive evidence.
More generally, $q_{\theta,\phi}\propto r_\phi\exp u_\theta$ is the posterior
implied by the latent-context model, whereas $\bar q_\theta$ corresponds to a
different model with a uniform latent-context prior.

The same variational formulation determines how the retriever should be
updated.
Let $\phi^{-}$ denote the current retriever parameters, and first compute
$q_{\theta,\phi^{-}}$ using the current model.
Holding this posterior fixed, maximizing
\Cref{eq:latent_context_variational} with respect to $\phi$ reduces to
\begin{align*}
\underset{\phi}{\max}\;
\bbE_{\cC_t\sim q_{\theta,\phi^{-}}}
\left[
\log r_\phi(\cC_t)
\right],
\end{align*}
or equivalently,
\begin{align*}
\underset{\phi}{\min}\;
\KL\!\left(
q_{\theta,\phi^{-}}
\,\middle\|\,
r_\phi
\right).
\end{align*}
Thus, the update alternates between using the realized future to compute
posterior credit and fitting the future-blind retriever to this fixed target.
In the notation of the main text, this is implemented by
\begin{align*}
\cL_{\ret}(\phi)
\coloneqq
\bbE
\left[
\KL\!\left(
\sg[q_{\theta,\phi}]
\,\middle\|\,
r_\phi
\right)
\right],
\end{align*}
which is exactly \Cref{eq:posterior_distillation}.
The stop-gradient prevents gradients from flowing through the posterior during
the retriever update, implementing the fixed-target step of this alternating
optimization rather than introducing an additional heuristic objective.

With the exact posterior, the same update also recovers the retriever-side
marginal-likelihood gradient.
At the parameters $\phi=\phi^{-}$ used to construct the posterior,
\begin{align}
\nabla_\phi
\log p_{\theta,\phi}(\bmo_{t+1}\mid\cdots)
\big|_{\phi=\phi^{-}}
=
\bbE_{\cC_t\sim q_{\theta,\phi^{-}}}
\left[
\nabla_\phi
\log r_\phi(\cC_t)
\right]_{\phi=\phi^{-}}
=
-
\nabla_\phi
\KL\!\left(
q_{\theta,\phi^{-}}
\,\middle\|\,
r_\phi
\right)
\bigg|_{\phi=\phi^{-}}.
\label{eq:retriever_gradient_identity}
\end{align}
Hence, for the exact posterior, posterior matching gives exactly the negative
marginal-likelihood gradient with respect to the retriever.

The derivation above applies to the exact posterior over recalled context sets.
Under discrete Top-$K$ recall, enumerating these sets is combinatorial.
The candidate-level objective in
\Cref{app:retriever_posterior_approx} therefore applies the same posterior-credit
principle to a finite candidate pool using singleton predictive utilities.
Because this restricts both the context space and the utility evaluation, the
exact gradient identity in \Cref{eq:retriever_gradient_identity} need not hold
globally for the practical approximation.

\subsection{Retriever Posterior Approximation}
\label{app:retriever_posterior_approx}

The future-aware posterior in \Cref{eq:future_posterior} is defined over all
admissible recalled context sets.
Under discrete Top-$K$ recall, exact evaluation is therefore combinatorial.
We adapt the latent-variable retriever training strategy of
\emdr~\citep{sachan2021emdr} to obtain a tractable candidate-level approximation
over a finite pool of historical observations.

For each retriever update, let
$\mathcal A_t\subseteq\cM_{t-1}$ denote the candidate pool.
The observation-level relevance scores from
\Cref{eq:multi_cue_score} induce the future-blind candidate distribution
\begin{align}
r_\phi^{\mathcal A_t}
\!\left(
\bmo_i
\mid
\cZ_{t-1},
\cR_t
\right)
\coloneqq
\operatorname{softmax}_{\bmo_i\in\mathcal A_t}
s_{\phi,i}.
\label{eq:candidate_prior}
\end{align}
This distribution represents the retriever's relative preference among the
candidates using only information available at inference time.

To assign predictive credit, we evaluate how well each candidate individually
supports prediction of the observed future.
The singleton predictive utility of candidate $\bmo_i$ is
\begin{align}
u_{\theta,i}
\coloneqq
u_\theta
\!\left(
\{\bmo_i\}
\mid
\bmo_{t+1},
\cQ_t
\right)
=
\log
p_\theta
\!\left(
\bmo_{t+1}
\mid
\{\bmo_i\},
\cQ_t
\right).
\label{eq:singleton_utility}
\end{align}
Applying the same predictive reweighting principle as
\Cref{eq:future_posterior} gives the future-aware candidate posterior
\begin{align}
q_{\theta,\phi}^{\mathcal A_t}
\!\left(
\bmo_i
\mid
\bmo_{t+1},
\cZ_{t-1},
\cR_t,
\cQ_t
\right)
\coloneqq
\operatorname{softmax}_{\bmo_i\in\mathcal A_t}
\left[
s_{\phi,i}
+
u_{\theta,i}
\right].
\label{eq:candidate_posterior}
\end{align}
Thus, $r_\phi^{\mathcal A_t}$ captures which candidates appear relevant before
the future is known, while $q_{\theta,\phi}^{\mathcal A_t}$ reweights them
according to their predictive utility for the realized future.

We train the retriever by matching its future-blind candidate distribution to
this future-aware target:
\begin{align}
\widehat{\cL}_{\ret}(\phi)
\coloneqq
\bbE
\left[
\KL\!\left(
\sg\!\left[
q_{\theta,\phi}^{\mathcal A_t}
\right]
\,\middle\|\,
r_\phi^{\mathcal A_t}
\right)
\right],
\label{eq:candidate_distillation}
\end{align}
where $\sg[\cdot]$ denotes stop-gradient and the expectation is over training
examples and candidate-pool construction.
For a fixed candidate pool with exact singleton likelihoods, this update yields
the same retriever gradient as the corresponding candidate-level latent-variable
objective of \emdr~\citep{sachan2021emdr}. FAR differs in how this candidate score is parameterized.
Through \Cref{eq:multi_cue_score}, the posterior supervision acts on both
cue-specific relevance and query-dependent cue weights.
For a fixed training example and candidate pool,
\begin{align*}
\nabla_\phi
\KL\!\left(
\sg[q_{\theta,\phi}^{\mathcal A_t}]
\,\middle\|\,
r_\phi^{\mathcal A_t}
\right)
&=
\bbE_{\bmo_i\sim r_\phi^{\mathcal A_t}}
\!\left[
\nabla_\phi s_{\phi,i}
\right]
-
\bbE_{\bmo_i\sim q_{\theta,\phi}^{\mathcal A_t}}
\!\left[
\nabla_\phi s_{\phi,i}
\right]
\\
&=
\sum_m
\lambda_{\phi,t}^m
\left(
\bbE_{r_\phi^{\mathcal A_t}}
\!\left[
\nabla_\phi\widetilde{s}_{\phi,i}^m
\right]
-
\bbE_{q_{\theta,\phi}^{\mathcal A_t}}
\!\left[
\nabla_\phi\widetilde{s}_{\phi,i}^m
\right]
\right)
\\
&\quad+
\sum_m
\left(
\bbE_{r_\phi^{\mathcal A_t}}
\!\left[
\widetilde{s}_{\phi,i}^m
\right]
-
\bbE_{q_{\theta,\phi}^{\mathcal A_t}}
\!\left[
\widetilde{s}_{\phi,i}^m
\right]
\right)
\nabla_\phi\lambda_{\phi,t}^m ,
\end{align*}
where $q_{\theta,\phi}^{\mathcal A_t}$ is treated with stop-gradient.
The decomposition exposes two learning pathways: one through the cue-specific
relevance scores and another through the query-dependent cue weights.
Thus, future-aware predictive credit shapes both how memories are scored under
each cue and how strongly each cue influences recall for the current query.
Thus, a cue is reinforced when it favors memories that receive greater
future-aware predictive credit.
With a single cue, the cue-reliability term vanishes, recovering the
\emdr-style candidate update; with multiple cues, the same predictive signal
also learns which cues to trust for the current recall query.

This approximation affects only retrieval credit assignment.
The world model is still trained on the complete recalled context and can
therefore reason jointly over multiple memories.
Candidate-level training does not explicitly score higher-order selection effects,
such as memories that become informative only when recalled together, while such
interactions remain represented by the set-level model in
\Cref{eq:latent_context_model,eq:future_posterior}.
For our video instantiation, singleton likelihood is replaced by the
diffusion-based utility estimator below, and the candidate pools are constructed
from the temporally structured recall policy described next.

\section{Additional Implementation Details}

\subsection{Implementation Details of FAR for the Video Diffusion Instantiation}
\label{app:video_far_details}

This section provides the architectural and optimization details of the video
instantiation described in \Cref{sec:video_instantiation}.
We specify the diffusion objective, retrieval-score parameterization, adaptive
cue fusion, predictive-utility estimator, and training procedure.

\paragraph{Video diffusion world model and objective.}
We parameterize $p_\theta$ with a diffusion transformer (DiT)
~\citep{peebles2023dit}.
Retrieved context-frame tokens are processed jointly with noisy target-frame
tokens, using alternating intra-frame and global self-attention following
VGGT~\citep{wang2025vggt}.
The world model predicts the next observation from the current observation and
action together with the recalled context frames.
Time and pose metadata associated with recalled frames are treated as part of
their context representation.

Below, we make the diffusion objective explicit because it serves both to train the
world model and, through its standard connection to likelihood-based diffusion
training, to provide a tractable surrogate for predictive utility.
Let $\bmx_0$ denote the clean diffusion representation of the target observation
$\bmo_{t+1}$.
At diffusion time $\tau$, we sample
$\bm{\epsilon}\sim\mathcal N(\bm{0},\bm{I})$ and construct
\begin{align*}
\bmx_\tau
=
\alpha_\tau \bmx_0
+
\sigma_\tau \bm{\epsilon},
\end{align*}
where $\alpha_\tau$ and $\sigma_\tau$ define the noise schedule.
Given recalled context $\cC_t$ and prediction query $\cQ_t$, the DiT outputs
$\bmf_\theta(\bmx_\tau,\tau;\cC_t,\cQ_t)$.
Let $\bmy_\tau$ denote the corresponding training target under the chosen
diffusion parameterization, e.g., $\bmy_\tau=\bm{\epsilon}$ for noise prediction.
The per-perturbation loss is
\begin{align*}
\ell_{\mathrm{diff}}
\!\left(
\theta;
\bmo_{t+1},
\cC_t,
\cQ_t,
\tau,
\bm{\epsilon}
\right)
=
w(\tau)
\left\|
\bmf_\theta(\bmx_\tau,\tau;\cC_t,\cQ_t)
-
\bmy_\tau
\right\|_2^2,
\end{align*}
where $w(\tau)$ is the weighting used by the diffusion objective.

The world model is trained on the complete recalled context
$\cC_{t,\ctopk}$:
\begin{align}
\cL_{\wm}(\theta)
\coloneqq
\bbE
\left[
\ell_{\mathrm{diff}}
\!\left(
\theta;
\bmo_{t+1},
\cC_{t,\ctopk},
\cQ_t,
\tau,
\bm{\epsilon}
\right)
\right],
\label{eq:world_model_diffusion_loss}
\end{align}
where the expectation is over training examples and diffusion perturbations.

Retrieval cues affect which observations enter $\cC_{t,\ctopk}$.
In particular, visual and audio retrieval representations are used for memory
selection and are not passed to the world model as additional generation
conditions.
This allows the retriever to exploit additional cues without changing the
generator's conditioning interface.

\paragraph{Cue-specific retrieval scores and cacheable representations.}
For the metadata cue, consisting of time and pose, we use a lightweight scoring
network:
\begin{align*}
s_{\phi,i}^{\mathrm{meta}}
=
\tanh\!\left(
\mlp_\phi
\left(
[
\bmz_i^{\mathrm{meta}},
\bmz_t^{\mathrm{meta}},
\bma_t
]
\right)
\right).
\end{align*}

For a high-dimensional cue
$m\in\{\vision,\audio\}$,
we encode each historical cue as a memory key and the current cue as an
action-conditioned query:
\begin{align*}
\bmk_i^m
&\coloneqq
\enc_m(\bmz_i^m,\varnothing),
&
\bm{h}_t^m
&\coloneqq
\enc_m(\bmz_t^m,\bma_t),
\\
\bmq_t^m
&\coloneqq
\mlp_{\phi,m}(\bm{h}_t^m),
&
s_{\phi,i}^m
&\coloneqq
\cossim(\bmk_i^m,\bmq_t^m),
\end{align*}
where $\varnothing$ denotes a null action.
We use a ViT~\citep{dosovitskiy2021vit}, inject the action or null token through
AdaLN conditioning~\citep{peebles2023dit}, and use a learnable
\texttt{[CLS]} token as the retrieval representation.
Audio follows the same construction with an audio-specific encoder.

To make historical keys cacheable, we first pretrain the high-dimensional
encoders with a predictive contrastive objective.
The action-conditioned embedding $\bm{h}_t^m$ is trained to match the subsequent
key $\bmk_{t+1}^m=\enc_m(\bmz_{t+1}^m,\varnothing)$.
Given $N-1$ negative keys $\{\bmk_j^{m,-}\}_{j=1}^{N-1}$, we minimize
\begin{align*}
\cL_{\nce}^{m}
=
-\bbE
\left[
\log
\frac{
\exp\!\left(
\cossim(\bm{h}_t^m,\bmk_{t+1}^m)/\tau_{\mathrm{NCE}}
\right)
}{
\exp\!\left(
\cossim(\bm{h}_t^m,\bmk_{t+1}^m)/\tau_{\mathrm{NCE}}
\right)
+
\sum_{j=1}^{N-1}
\exp\!\left(
\cossim(\bm{h}_t^m,\bmk_j^{m,-})/\tau_{\mathrm{NCE}}
\right)
}
\right],
\end{align*}
where $\tau_{\mathrm{NCE}}$ is the contrastive temperature.
After pretraining, the encoder is frozen and historical keys are computed once
and cached.
The lightweight query adapter $\mlp_{\phi,m}$ remains trainable through FAR.

\paragraph{Adaptive cue weighting.}
Because different cues may produce scores on different scales, we standardize
each cue over the current episodic memory:
\begin{align*}
\widetilde{s}_{\phi,i}^{m}
=
\frac{
s_{\phi,i}^{m}-\mu_t^m
}{
\sigma_t^m+\delta
},
\end{align*}
where $\mu_t^m$ and $\sigma_t^m$ are the mean and standard deviation of
$\{s_{\phi,i}^m\}_{\bmo_i\in\cM_{t-1}}$, and $\delta>0$ is a small numerical
constant.

The query-dependent weights in \Cref{eq:multi_cue_score} are produced from
summary statistics of each cue:
\begin{align*}
\lambda_{\phi,t}^{m}
\propto
\exp\!\left(
\bmw_\phi^\top
\left[
\begin{array}{c}
\max_{\bmo_i\in\cM_{t-1}} s_{\phi,i}^m \\
\mu_t^m \\
\sigma_t^m \\
\bm e_m \\
\Delta t
\end{array}
\right]
\right),
\qquad
\sum_m \lambda_{\phi,t}^{m}=1,
\end{align*}
where $\bm e_m$ is a one-hot cue-type indicator and $\Delta t$ is the simulation
stride.
The normalization is taken over the cues available in the current environment.
Because these score statistics depend on the current query, FAR can vary the
relative importance of the available cues from one retrieval query to another.

\paragraph{Diffusion-based predictive utility.}
The exact singleton utility $u_{\theta,i}
=
\log
p_\theta
\!\left(
\bmo_{t+1}
\mid
\{\bmo_i\},
\cQ_t
\right)$ in \Cref{eq:singleton_utility} requires the
conditional log-likelihood, which is expensive to evaluate
for a diffusion model.
Diffusion models are instead trained with denoising objectives that arise from,
or are closely related to, variational likelihood training~\citep{ho2020denoising,kingma2021variational,song2021maximum,lai2025principles}.
We therefore use the negative diffusion prediction loss as a tractable surrogate
for comparing the predictive utility of candidate memories.
For candidate memory $\bmo_i$, we evaluate the realized future using the
singleton context $\{\bmo_i\}$:
\begin{align}
\widehat{u}_{\theta,i}
\coloneqq
-
\frac{1}{S}
\sum_{s=1}^{S}
\ell_{\mathrm{diff}}
\!\left(
\theta;
\bmo_{t+1},
\{\bmo_i\},
\cQ_t,
\tau_s,
\bm{\epsilon}_s
\right).
\label{eq:diffusion_proxy}
\end{align}
We use $S=4$.
A memory receives greater predictive credit when conditioning on it yields lower
diffusion prediction loss for the future that actually occurred.

All candidates in the same training example are evaluated using the same
perturbations
$\{(\tau_s,\bm{\epsilon}_s)\}_{s=1}^{S}$.
This common randomness reduces variation caused by diffusion sampling and makes
candidate comparisons more directly reflect the recalled memory.
We substitute $\widehat{u}_{\theta,i}$ for $u_{\theta,i}$ in
\Cref{eq:candidate_posterior}, with the resulting posterior treated with
stop-gradient during the retriever update.

Singleton contexts are used for retrieval credit assignment, while the world
model is trained on the complete recalled context
$\cC_{t,\ctopk}$ through \Cref{eq:world_model_diffusion_loss}.

\paragraph{Temporally structured recall and candidate pools.}
Nearby video frames are often redundant, so unconstrained Top-$K$ selection may
allocate several context slots to the same short temporal region.
We partition $\cM_{t-1}$ into non-overlapping chunks
$\{\cB_{t-1,j}\}_j$ of $n_{\chunk}$ consecutive observations and allow at most
one recalled observation from each chunk:
\begin{align}
\cC_{t,\ctopk}
\coloneqq
\underset{\cC_t\subseteq\cM_{t-1}}{\arg\max}
\;
\sum_{\bmo_i\in\cC_t}s_{\phi,i}
\quad
\text{s.t.}
\quad
|\cC_t|=K,
\qquad
|\cC_t\cap\cB_{t-1,j}|\leq1
\;\; \forall j.
\label{eq:chunk_topk}
\end{align}
Equivalently, we retain the highest-scoring observation from each chunk and then
select the $K$ highest-scoring representatives.

For retriever training, we use a global candidate pool containing the selected
representatives and a local pool containing one of their corresponding chunks:
\begin{align*}
\mathcal A_t^{\mathrm{global}}
=
\cC_{t,\ctopk},
\qquad
\mathcal A_t^{\mathrm{local}}
=
\cB_{t-1,j}.
\end{align*}
The global pool assigns predictive credit across temporal regions, while the
local pool distinguishes observations within one selected region.
For each pool, we compute \Cref{eq:diffusion_proxy} and apply the future-aware
candidate target in \Cref{eq:candidate_posterior} with the loss in
\Cref{eq:candidate_distillation}.

\begin{algorithm}[t]
\caption{Training Future-Aware Recall with a Video Diffusion World Model}
\label{alg:joint_training}
\begin{algorithmic}[1]
\Require retrieval size $K$, chunk size $n_{\chunk}$,
retriever interval $N_{\lazy}$, utility samples $S$
\For{training step $n=1,2,\ldots$}
    \State Sample
    $(\bmo_{t+1},\cM_{t-1},\cZ_{t-1},\cR_t,\cQ_t)$
    \State Compute and fuse cue-specific scores using
    \Cref{eq:multi_cue_score}
    \State Select $\cC_{t,\ctopk}$ using \Cref{eq:chunk_topk}
    \State Sample $(\tau,\bm{\epsilon})$ and update $\theta$ using
    $\ell_{\mathrm{diff}}
    (\theta;\bmo_{t+1},\cC_{t,\ctopk},\cQ_t,\tau,\bm{\epsilon})$
    \If{$n \bmod N_{\lazy}=0$}
        \State Set
        $\mathcal A_t^{\mathrm{global}}\gets\cC_{t,\ctopk}$
        \State Sample a represented chunk $\cB_{t-1,j}$ and set
        $\mathcal A_t^{\mathrm{local}}\gets\cB_{t-1,j}$
        \State Sample shared perturbations
        $\{(\tau_s,\bm{\epsilon}_s)\}_{s=1}^{S}$
        \For{$\mathcal A_t
        \in
        \{\mathcal A_t^{\mathrm{global}},
          \mathcal A_t^{\mathrm{local}}\}$}
            \State Evaluate $\widehat{u}_{\theta,i}$ for
            $\bmo_i\in\mathcal A_t$ using \Cref{eq:diffusion_proxy}
            \State Construct \Cref{eq:candidate_posterior} using
            $\widehat{u}_{\theta,i}$
            \State Compute the corresponding loss in
            \Cref{eq:candidate_distillation}
        \EndFor
        \State Update $\phi$ with the sum of the global and local losses
    \EndIf
\EndFor
\end{algorithmic}
\end{algorithm}

\paragraph{Joint training and inference.}
The world model is updated at every optimization step using
\Cref{eq:world_model_diffusion_loss}.
Candidate-wise utility evaluation requires additional DiT forward passes, so we
update the retriever once every $N_{\lazy}$ world-model steps.
At each retriever update, the global and local candidate pools are evaluated
using shared diffusion perturbations, and $\phi$ is updated with the sum of their
candidate-distillation losses.

At inference, FAR requires neither the future observation nor predictive-utility
evaluation.
Historical retrieval keys remain cached; FAR computes the available cue-specific
scores, adaptively fuses them, and recalls $\cC_{t,\ctopk}$ for world-model
prediction.

\subsection{Dataset Details}
\label{app:dataset_details}

\Cref{tab:dataset_stats} summarizes the corpora used in our experiments.
All datasets follow an exploration-or-outbound phase followed by a return or reveal phase, but differ in what information must be recovered from memory.

\begin{table*}[t]
\centering
\small
\setlength{\tabcolsep}{5pt}
\begin{tabular}{lccccc}
\toprule
& LoopNav & SoundSpaces v1 & SoundSpaces v2 & AI2-THOR & AI2-THOR-dyn \\
\midrule
Environment
& Minecraft & Matterport3D & Matterport3D & iTHOR & iTHOR \\
Episodes
& $19{,}200$ & $14{,}425$ & $14{,}425$ & $8{,}125$ & $1{,}202$ \\
Scenes
& $120$ & $81$ & $81$ & $119$ & $7$ \\
Train/test split
& episode-level & scene-disjoint & scene-disjoint & scene-disjoint & episode-level \\
Train/test scenes
& $120/120$ & $57/24$ & $57/24$ & $95/24$ & $7/7$ \\
Frames
& $12.62$M & $9.28$M & $10.95$M & $9.23$M & $1.85$M \\
FPS
& $20$ & $10$ & $10$ & $10$ & $10$ \\
Resolution
& $360\!\times\!640$ & $256\!\times\!256$ & $256\!\times\!256$ & $256\!\times\!256$ & $256\!\times\!256$ \\
Additional signal
& -- & binaural audio & binaural audio & interactions & second-agent state \\
\bottomrule
\end{tabular}
\caption{\textbf{Dataset statistics.}
SoundSpaces and AI2-THOR use scene-disjoint evaluation, whereas LoopNav and the reported AI2-THOR-dyn experiments use episode-level splits with the same scenes appearing in training and test.}
\label{tab:dataset_stats}
\end{table*}

\paragraph{LoopNav.}
We use the released LoopNav dataset~\citep{lian2025loopnav}, containing $19{,}200$ loop-navigation episodes across $120$ procedurally generated Minecraft village scenes.
Episodes follow ABA or ABCA routes over four range tiers, and the frame at which the released goal metadata turns toward home defines the boundary between the outbound and return legs.
We use the outbound leg as episodic memory and evaluate prediction over the return leg.
Our $80/20$ split is performed over episodes without a scene constraint, so all $120$ scenes appear in both training and test and the benchmark measures generalization to unseen trajectories within known worlds.

\paragraph{SoundSpaces.}
We construct two paired SoundSpaces corpora~\citep{chen2020soundspaces} from Matterport3D environments, each containing $14{,}425$ episodes over $81$ eligible scenes with a scene-disjoint split of $57$ training and $24$ test scenes.
An agent traverses an ABA or ABCA loop while two to four static sound sources play continuously, with spatialized binaural audio rendered at $16$ kHz throughout the trajectory.
The outbound portion serves as memory and the final return leg is a freshly planned shortest path back to the starting location.
SoundSpaces v1 performs $360^\circ$ scans only at loop endpoints, whereas v2 adds deterministic turnaround and mid-path scans triggered by region changes, elevation changes, or long intervals without a scan.
The two variants use the same episode manifest, seeds, waypoints, sound sources, clips, and gains, so matched v1/v2 comparisons isolate the effect of denser visual coverage.
No scans are inserted into the return leg.

\paragraph{AI2-THOR.}
We construct $8{,}125$ single-room iTHOR episodes~\citep{kolve2017ai2thor} over $119$ scenes, using $95$ scenes for training and $24$ held-out scenes for evaluation.
During exploration, the agent visits object stations, opens containers, and moves objects between surfaces and containers before revisiting the same stations during the return phase.
Station inspections use reproducible canonical viewpoints, creating same-pose observations that can depict different world states before and after an interaction.
Moved objects are hidden while carried, so the updated state must be inferred from the interaction history rather than visually tracked in transit.

\paragraph{AI2-THOR-dyn.}
We additionally construct $1{,}202$ corridor episodes over seven iTHOR rooms, including $1{,}002$ episodes with a moving second agent and $200$ frame-aligned static controls.
The observer watches a corridor through a constrained aperture while the second agent repeatedly crosses between its two ends, then traverses a door and looks toward both ends.
The observation post is verified to reveal the crossing agent at the aperture while hiding both corridor ends, and quality control rejects episodes containing visibility leaks outside the intended crossing windows.
The second agent alternates ends on each crossing, so its final location depends on the observed crossing history rather than the current view alone.
The reported experiments follow the seed-based split used by our training manifest, with $800$ acted episodes for training and $202$ acted episodes.
All seven rooms therefore appear in both splits, so these results measure generalization to unseen trajectories within known rooms rather than held-out-room generalization.

\subsection{Training Details}
\label{app:training_details}

\paragraph{Latent preprocessing.}
All corpora are trained in latent space with a frozen tokenizer, and frame latents are precomputed once before world-model training.
LoopNav uses the ViTVAE tokenizer from OASIS~\citep{oasis2024}, producing $16\times18\times32$ latents from $360\times640$ RGB frames with latent scale $0.07843137255$.
SoundSpaces and both AI2-THOR corpora use the SDXL VAE~\citep{podell2024sdxl} with the fp16 fix, producing $4\times32\times32$ latents from $256\times256$ RGB frames with latent scale $0.13025$.

\begin{table}[t]
\centering
\small
\setlength{\tabcolsep}{5pt}
\begin{tabular}{lc}
\toprule
Setting & Value \\
\midrule
DiT depth / hidden / heads & $12 / 768 / 12$ \\
Patch size & $2$ \\
Training context size & $4$ (current $+3$ recalled) \\
Diffusion steps & $1000$ \\
Noise schedule & Linear \\
Prediction target & $\boldsymbol{\epsilon}$-prediction \\
Variance & Learned \\
Optimizer & AdamW \\
Generator learning rate & $10^{-4}$ \\
Retriever learning rate & $10^{-4}$ \\
AdamW $\epsilon$ / weight decay & $10^{-6} / 0$ \\
Warm-up & $5$k steps, $0.2\times \rightarrow 1\times$ \\
Gradient clipping & $1.0$ \\
EMA decay & $0.9999$ \\
Utility noise levels & $4$ \\
Training precision & fp32 \\
\bottomrule
\end{tabular}
\caption{\textbf{Shared model and optimization settings.}
These settings are used across all corpora unless stated otherwise.}
\label{tab:shared_hparams}
\end{table}

\begin{table*}[t]
\centering
\small
\setlength{\tabcolsep}{4pt}
\resizebox{\textwidth}{!}{
\begin{tabular}{lccccc}
\toprule
& LoopNav & SoundSpaces v1 & SoundSpaces v2 & AI2-THOR & AI2-THOR-dyn \\
\midrule
Training pool size
& $200$ & $200$ & $200$ & $400$ & $400$ \\
Prediction horizon (frames)
& $128$ & $128$ & $128$ & $32$ & $32 / 135^{\dagger}$ \\
Goals per observation
& $4$ & $4$ & $4$ & $4$ & -- \\
Retrieval chunk size
& $10$ & $10$ & $10$ & $10$ & $30$ \\
FAR retrieval cues
& Meta+Vision & Meta+Audio & Meta+Audio & Meta+Vision & Meta / Meta+Agent \\
Metadata MLP layers
& $2$ & $2$ & $2$ & $3$ & $3$ \\
Retriever update interval
& $20$ & $20$ & $20$ & $20$ & $7$ \\
Event-anchored sampling
& -- & -- & -- & $0.30$ & $0.35$ \\
Change-mask weight
& -- & -- & -- & $0.3$ & $5.0$ \\
FAR effective batch size
& $128$ & $64$ & $64$ & $64$ & $64$ \\
\bottomrule
\end{tabular}}
\caption{\textbf{Dataset-specific training settings.}
The FAR cue row gives the signals used for retrieval; visual and audio retrieval representations are not passed to the world model as generation conditions.
$^{\dagger}$AI2-THOR-dyn uses a $135$-frame horizon for reveal-anchored pairs and $32$ frames otherwise.}
\label{tab:training_hparams}
\end{table*}

\paragraph{Optimization and sampling.}
The learning rate is linearly warmed from $0.2\times$ to its target value over the first $5{,}000$ steps and held constant thereafter.
We clip the global gradient norm to $1.0$, maintain an EMA with decay $0.9999$, and evaluate the EMA model every $2{,}500$ steps.
FAR uses an effective batch size of $128$ on LoopNav and $64$ on the remaining corpora.
The Temporal baseline uses effective batch sizes of $112$ on LoopNav and $63$ on SoundSpaces, while the remaining baseline arms match the corresponding FAR batch size.

\paragraph{Training pairs and memory pools.}
The dataset-specific settings are summarized in \Cref{tab:training_hparams}.
For LoopNav, training queries are sampled from the return leg while the retrievable pool is restricted to the outbound leg.
SoundSpaces uses the corresponding outbound portion as memory for prediction on the return trajectory.
AI2-THOR restricts the retrievable pool to the pre-evaluation exploration phase.
With probability $0.30$, an AI2-THOR training example is replaced by an event-anchored pair centered on an evaluation-phase open/close event, providing direct supervision near changed-state reveals.
AI2-THOR-dyn uses reveal-anchored sampling with probability $0.35$; anchored examples place the current frame during the door transit before the reveal and use the longer $135$-frame prediction range.

\paragraph{Retriever initialization and updates.}
The high-dimensional FAR retrieval encoders are initialized by contrastive pretraining before joint world-model training, and cached historical keys are exported from the same checkpoint used to initialize the corresponding query tower.
The QueryViT has six transformer layers, hidden dimension $384$, $12$ attention heads, and key dimension $256$.
Pretraining uses eight positive and $24$ negative keys per query.
The LoopNav, SoundSpaces, and AI2-THOR retrievers are pretrained for $400$k steps with learning rate $2\times10^{-4}$, while AI2-THOR-dyn uses a $12.5$k-step object-centric fine-tuning stage with learning rate $5\times10^{-5}$.
For SoundSpaces, positive pairs are selected by spatial proximity without a heading constraint so that a return-phase query can match an outbound observation seen from the opposite heading, and negatives are drawn from the same episode to avoid an episode-identity shortcut.
During joint training, the world model is updated at every optimization step from the context selected by the retriever.
Future-aware retriever updates use four stratified diffusion noise levels with shared perturbations across candidates and combine the global and within-chunk retrieval losses with equal weight.
LoopNav, both SoundSpaces variants, and AI2-THOR update the retriever every $20$ world-model steps, while AI2-THOR-dyn uses an interval of $7$ steps and applies the retriever update only to reveal-anchored examples.

\paragraph{Change-focused supervision in AI2-THOR.}
Because object interactions often affect only a small image region, AI2-THOR reweights the diffusion loss using a per-goal change mask.
The mask is normalized to preserve unit mean loss weight and uses coefficient $0.3$ with a maximum weighted-area fraction of $0.15$.
The same mask weighting is applied to every AI2-THOR retrieval arm so that comparisons remain matched.
AI2-THOR-dyn instead uses the second-agent silhouette as the loss mask with coefficient $5.0$ and the same maximum fraction, and applies this weighting to both the diffusion reader loss and FAR's predictive-utility reward.

\paragraph{Second-agent supervision in AI2-THOR-dyn.}
Each historical frame carries the second agent's visibility and egocentric floor position together with the same quantities $1.5$ seconds earlier, allowing a recalled slot to encode local motion.
An attention-pooling predictor over the recalled slots predicts second-agent presence and position using binary cross-entropy for presence and a masked position loss with weight $0.05$.
This auxiliary loss enters the training objective with weight $1.0$, and the same agent-state term is included in the candidate reward used for retriever training.
During training, the diffusion model receives the ground-truth goal agent state through teacher forcing, while inference uses the predictor output as described below.

\subsection{Inference Details}
\label{app:inference_details}

\begin{table*}[t]
\centering
\small
\setlength{\tabcolsep}{4pt}
\resizebox{\textwidth}{!}{
\begin{tabular}{lccccc}
\toprule
& LoopNav & SoundSpaces v1 & SoundSpaces v2 & AI2-THOR & AI2-THOR-dyn \\
\midrule
Evaluation checkpoint
& $700$k & $700$k & $700$k & $700$k & $100$k \\
Evaluation protocol
& rollout & rollout & rollout & rollout & both-ends probe \\
Stride (frames / seconds)
& $4 / 0.2$ & $16 / 1.6$ & $16 / 1.6$ & $4 / 0.4$ & -- \\
Context size
& $12$ & $12$ & $12$ & $12$ & $4$ \\
Retrieval pool size
& $1000$ & $1000$ & $1000$ & $1000$ & -- \\
Retrieval pool phase
& outbound & outbound & outbound & exploration & pre-reveal history \\
Retrieval chunk size
& $10$ & $10$ & $10$ & $10$ & $30$ \\
Sampler
& DDIM-$20$ & DDIM-$20$ & DDIM-$20$ & DDIM-$20$ & DDIM-$20$ \\
\bottomrule
\end{tabular}}
\caption{\textbf{Dataset-specific inference settings.}
The four rollout benchmarks expand the context from $4$ frames during training to $12$ frames at evaluation and query up to $1000$ real memory frames.}
\label{tab:inference_hparams}
\end{table*}

\paragraph{Autoregressive rollout protocol.}
For LoopNav, SoundSpaces, and AI2-THOR, we evaluate with replace-tail autoregressive rollouts along the ground-truth camera path using the settings in \Cref{tab:inference_hparams}.
At each step, the world model generates the next latent and the generated latent replaces only the current tail frame for the following step.
The retrievable episodic memory remains composed entirely of real observations from the outbound or exploration phase, so rollout errors do not contaminate the stored memory.
Camera poses are re-centered on the new current frame at every step using the same coordinate convention as in training.
The retrieval module is queried at every generation step.
Every method on a given clip uses the same diffusion-noise seed and target trajectory.

\paragraph{Generation conditioning.}
At each rollout step, the generator receives the current frame, recalled context frames, per-frame pose and trajectory-time conditioning, and the target action represented as a local pose displacement together with the generation stride.
For AI2-THOR, the generator additionally receives the interaction labels occurring between the current and target states together with the absolute door fraction at the target.
These interaction labels are supplied by the evaluation trajectory rather than generated by the model because camera motion alone does not specify object interactions.
Visual and audio retrieval representations are used only to rank episodic memory and are not passed to the generator as additional conditioning variables.
Thus, SoundSpaces uses binaural audio to select context while generation remains conditioned on the selected visual frames together with pose, time, and action information.

\paragraph{Evaluation operating point.}
All reported autoregressive rollouts use EMA weights and DDIM~\citep{song2021denoising} with $20$ sampling steps.
Relative to training, the rollout protocol increases the context size from $4$ to $12$ and the retrieval pool from $200$ or $400$ frames to as many as $1000$ frames, uniformly across compared methods within each corpus.
WorldMem retains a $200$-frame pool cap on LoopNav and SoundSpaces to match its training configuration, while AI2-THOR uses the full evaluation pool.
The reported batteries contain $3{,}796$ LoopNav clips, $1{,}167$ clips for each SoundSpaces variant, and $1{,}309$ AI2-THOR clips.

\paragraph{AI2-THOR-dyn inference.}
AI2-THOR-dyn is evaluated with a one-step both-ends prediction probe rather than an autoregressive rollout.
For each memory-dependent test trajectory, the two possible reveal ends are generated separately from the same current observation using the same diffusion-noise seed, so the generations differ only in the goal and retrieved memory.
The world model uses the agent-state predictor's own binarized output at inference without teacher forcing.
Real memory frames retain their ground-truth second-agent detections, while generated states are threaded forward using the predictor output together with the corresponding state from $1.5$ seconds earlier.
A three-layer latent-space detector determines whether the generated frame contains the second agent, with at least $10$ firing latent cells counted as a positive rendering.
On real test frames, this detector has $0.906$ recall and $0.070$ false-positive rate at the chosen threshold.

\end{document}

%% file: math_commands.tex
\usepackage{amsmath,amsfonts,bm}

\def\eqref#1{equation~\ref{#1}}

\def\1{\bm{1}}

\DeclareMathAlphabet{\mathsfit}{\encodingdefault}{\sfdefault}{m}{sl}
\SetMathAlphabet{\mathsfit}{bold}{\encodingdefault}{\sfdefault}{bx}{n}

\newcommand{\KL}{\mathcal{D}_{\mathrm{KL}}}

